\newcommand{\keywords}[1]{}
\newcommand{\qed}{\hfill $\Box$}
\renewenvironment{abstract}
{
	\begin{center}
	\bfseries \abstractname\vspace{-.5em}\vspace{0pt}
	\end{center}
}%
{\vspace{3ex}}
\begin{document}
\author{\normalsize Jannik Presberger$^1$ $\cdot$ 
	Rashmiparvathi Keshara$^2$ $\cdot$ 
	David Stein$^1$ $\cdot$
	\\[-0.4ex]\normalsize
	Yung Hae Kim$^2$ $\cdot$ 
	Anne Grapin-Botton$^2$ $\cdot$ 
	Bjoern Andres$^{1,3}$}
\title{\bf \Large Correlation Clustering of Organoid Images}
\date{\normalsize $^1$TU Dresden 
	$\cdot$ $^2$Max Planck Institute of Molecular Cell Biology and Genetics $\cdot$
	\\ 
	$^3$Center for Scalable Data Analytics and AI Dresden/Leipzig}
\maketitle
\begin{abstract}
In biological and medical research, scientists now routinely acquire microscopy images of hundreds of morphologically heterogeneous organoids and are then faced with the task of finding patterns in the image collection, i.e., subsets of organoids that appear similar and potentially represent the same morphological class. We adopt models and algorithms for correlating organoid images, i.e., for quantifying the similarity in appearance and geometry of the organoids they depict, and for clustering organoid images by consolidating conflicting correlations. For correlating organoid images, we adopt and compare two alternatives, a partial quadratic assignment problem and a twin network. For clustering organoid images, we employ the correlation clustering problem. Empirically, we learn the parameters of these models, infer a clustering of organoid images, and quantify the accuracy of the inferred clusters, with respect to a training set and a test set we contribute of state-of-the-art light microscopy images of organoids clustered manually by biologists.
\keywords{Biomedical image analysis \and partial quadratic assignment problem \and correlation clustering problem \and twin network}
\end{abstract}

\tableofcontents\clearpage
\section{Introduction}

Organoids are biological in vitro model systems used as tools to study organogenesis including morphological development, adult tissue homeostasis, function, disease manifestation, and drug discovery \cite{huch-2017,lukonin-2021,lancaster-2014,lewis-2021,keshara-2022,lee-2022}.
Thanks to recent progress in organoid biology and imaging \cite{rossi-2018,bremond-2021,fillioux-2023}, scientists now routinely acquire light microscopy images of hundreds of organoids at once (cf.~\Cref{figure:image-collection}), which reveals morphological heterogeneity in nature of biological systems. 
For various biological questions, it is essential to identify and classify these organoids based on their morphological characteristics \cite{keshara-2022}.
Hence, scientists are faced with the task of finding patterns in the image collection, i.e., subsets of organoid images that appear similar and potentially represent the same morphological class.

In this article, we adopt mathematical models and algorithms for \emph{correlating} organoid images, i.e., for quantifying the similarity in appearance and geometry of the organoids they depict, and for \emph{clustering} organoid images by consolidating conflicting correlations.
For correlating organoid images, we adopt two alternatives, a partial quadratic assignment problem \cite{haller-2022} with five adjustable parameters, and a twin network \cite{chicco-2021} with $10^7$ adjustable parameters.
For clustering organoid images, we employ the correlation clustering problem \cite{demaine-2006} with costs defined by one or the other correlation model.
Empirically, we learn the parameters of these models, infer a clustering of organoid images, and quantify the accuracy of the inferred clusters, with respect to a training set and a test set of state-of-the-art light microscopy images of organoids clustered manually by biologists.
Both the data sets and the complete source code for reproducing the experiments are contributions of this article and are published as supplementary material.

\begin{figure}[t]
\includegraphics[width=\textwidth]{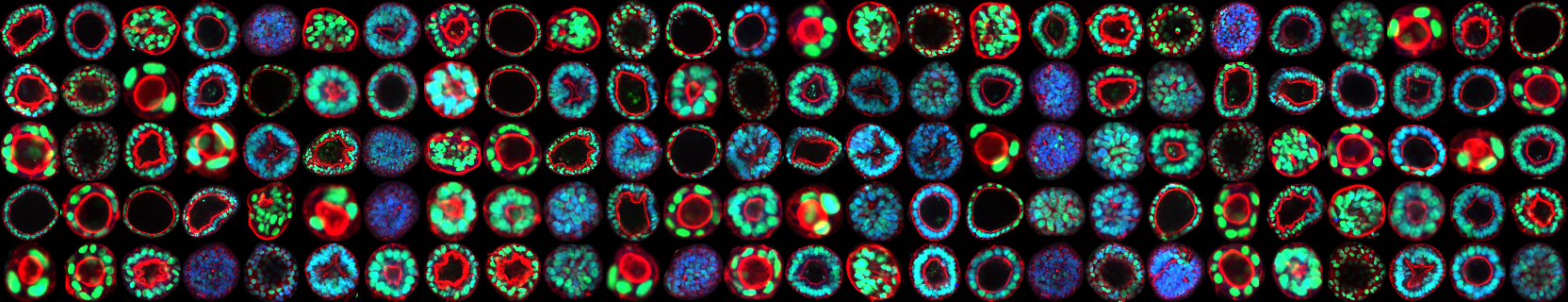}
\caption{Depicted above are 130 images (scaled differently to the same size for illustration) of pancreatic progenitor organoids derived from human pluripotent stem cells. 
These organoids consist of cells expressing a nuclear Green Fluorescent Protein reporter for PDX1 (a pancreatic progenitor marker gene). 
After fixation, the organoids were stained with DAPI (blue) to mark the nucleus and Phalloidin (red) for F-Actin. 
Images were acquired using an automated spinning disc confocal microscope (20x objective of Yokogawa CV7000).}
\label{figure:image-collection}
\end{figure}

\section{Related work}
The task of correlating images of composite objects by bringing the objects in correspondence has been studied comprehensively \cite{cho-2013,min-2019,bourdev-2009}.
The partial quadratic assignment problem, also known as the graph matching problem \cite{torresani-2008}, a relaxation of the quadratic assignment problem \cite{lawler-1963,cela-2013}, is a mathematical abstraction of this task \cite{torresani-2008,hutschenreiter-2021}.
Algorithms for this \textsc{np}-hard problem are an active area of computer vision research \cite{abbas-2024,dlask-2023,abbas-2022,haller-2022,hutschenreiter-2021,swoboda-2019,swoboda-2017};
see \cite{haller-2022} for a recent survey.
In applications to natural images such as \cite{cho-2013,min-2019,bourdev-2009}, one arrives at the partial quadratic assignment problem by fixing a set of key points (of one or several object models) and estimating from images annotated with key points \cite{cho-2013,min-2019,bourdev-2009} the appearance of key points as well as their relative location, e.g., as described in \cite{rolinek-2020,tourani-2023}.
In the application to organoid images, which exhibit fairly heterogeneous objects in their morphologies \cite{grapin-botton-2022}, we cannot fix the set of key points, and we do not know the object models, as we do not know the morphological classes.
Instead, every image is potentially a separate organoid class.
Our approach to organoid images differs from published approaches to natural images in that we fix a set of key point \emph{classes} and an algorithm that extracts from each organoid image key points and a separate organoid model that we then match with the organoid models extracted from all other organoid images.
We estimate the cost coefficients of the partial quadratic assignment problems not from key point annotations but from a clustering of an image collection.
Compared to \cite{rolinek-2020,tourani-2023}, our object models, learning algorithm and inference algorithm are less sophisticated, but the set of object models is larger and data-dependent and gives rise to partial assignment problems with complete bipartite graphs that are hard to solve exactly.

The task of clustering images based on visual similarity is typically addressed by first estimating an embedding of single images into a metric space \cite{bhatia-2015,wang-2017}, and then clustering these points by some form of metric-based clustering, e.g., $k$-means clustering.
Another approach to clustering images is to first estimate a map from any \emph{pair} of images to a real number that is positive for images in the same cluster and negative for images in distinct clusters (at loss zero), and to then cluster a set of images by correlation clustering \cite{bansal-2004,demaine-2006}.
Examples of this approach include \cite{ho-2021,rouhi-2021,marra-2016}.
A combination of both approaches is introduced in \cite{abbas-2023}.
One reason why forensics researchers \cite{rouhi-2021,marra-2016} and also we here adopt the correlation clustering problem is its unbiasedness with respect to clusterings a priori \cite{kappes-2016,alush-2016,andres-2011}.
Thanks to recent progress \cite{abbas-2023,jovanovic-2023,abbas-2022-rama,veldt-2022,sorensen-2020} toward algorithms for the \textsc{np}-hard correlation clustering problem and the closely related clique partition problem and multicut problem, some instances not accessible previously can now be solved exactly; see \cite{sorensen-2023,swoboda-2023-structured} for recent benchmarks.

The task of extracting images of individual organoids from microscopy images of numerous organoids within a field of view is referred to as organoid segmentation \cite{park-2023}.
The images of individual organoids we contribute as supplementary material are extracted from microscopy images using \cite{schmidt-2018} as we describe in \Cref{appendix:organoid-segmentation}.
The task of organoid tracking \cite{matthews-2022} is not addressed in this article.

\section{Models}
\label{section:models}

\begin{figure}[t]
\input{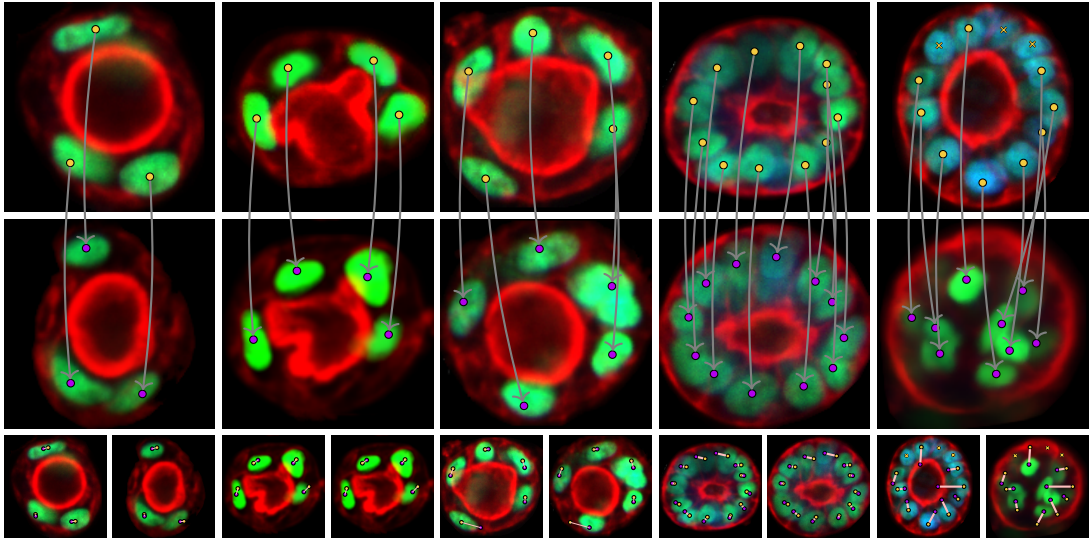}
\vspace{-4ex}
\caption{Depicted above, from left to right, are assignments (gray lines) between key points of five pairs of organoid images. 
Depicted in the bottom row are projections onto the first and the second image.
Depicted in Columns 1--4 are assignments between morphologically similar organoids.
Depicted in the last column is an assignment between dissimilar organoids. 
Note that distances between assigned key points are larger here.
For illustration, images are rotated, and only key points of cell nuclei are shown.}
\label{figure:assignment-example5}
\end{figure}

\subsection{Correlating organoid images by partial quadratic assignment}
\label{section:models-assignment}

The first idea we pursue in order to compare one image of a first organoid to another image of a second organoid is to assign points in the first image to points in the second image such that the images appear similar at assigned points, and such that the assignments of any two pairs of points are geometrically consistent.
This idea is made rigorous below and is illustrated in \Cref{figure:assignment-example5}.

\paragraph{Feasible solutions}
We consider a finite collection $J$ of organoid images.
From any image $j \in J$, we extract a finite set $V_j$ of key points as described in \Cref{appendix:assignment-key-points}.
For any pair $j,k \in J$ of distinct images, we call a relation between $V_j$ and $V_k$ a \emph{feasible assignment} if and only if any point in $V_j$ is related to at most one point in $V_k$, and any point in $V_k$ is related to at most one point in $V_j$.
We encode any feasible assignment by the vector $x \in \{0,1\}^{V_j \times V_k}$ such that for any point $v \in V_j$ and any point $w \in V_k$, we have $x_{vw} = 1$ if and only if $v$ is assigned to $w$.
The set $X_{V_j V_k}$ of the encoding vectors of all feasible assignments is written below, with the above conditions expressed equivalently in the form of linear inequalities.
\begin{align*}
X_{V_j V_k} = \bigg\{ x \in \{0,1\}^{V_j \times V_k} \,\bigg|\,
& \forall v \in V_j \colon \!\! \sum_{w \in V_k} \! x_{vw} \leq 1 
\ \ \text{and}\ \ 
\forall w \in V_k \colon\!\! \sum_{v \in V_j} \! x_{vw} \leq 1 \bigg\}
\end{align*}

\paragraph{Partial quadratic assignment problem}
For any images $j, k \in J$ and any points $v \in V_j$ and $w \in V_k$, we define below a cost $c^{jk}_{vw} \in \mathbb{R}$ that is negative if the point $v$ in the image $j$ appears similar to the point $w$ in the image $k$, and positive if the images appear dissimilar at these points.
For any distinct points $v, v' \in V_j$ and any distinct points $w, w' \in V_k$, we also define below a cost $c^{jk}_{vwv'w'} \in \mathbb{R}$ that is negative if the assignment of the point $v$ in the image $j$ to the point $w$ in the image $k$ is geometrically consistent with the assignment of the point $v'$ in the image $j$ to the point $w'$ in the image $k$, and positive otherwise.
For any feasible solution $x \in X_{V_j V_k}$, we define its cost $\varphi_{c^{jk}}(x) \in \mathbb{R}$ as
\begin{align}
\varphi_{c^{jk}}(x) =\ &
\frac{1 - \lambda}{n^{jk}_1} \sum_{v \in V_j} \sum_{w \in V_k} c^{jk}_{vw} \, x_{vw}
\nonumber\\
& + \frac{\lambda}{n^{jk}_2} \sum_{v \in V_j} \sum_{v' \in V_j \setminus \{v\}} \sum_{w \in V_k} \sum_{w' \in V_k \setminus \{w\}} c^{jk}_{vwv'w'} \, x_{vw} \, x_{v'w'}
\label{eq:assignment-objective}
\end{align}
with the constants $n^{jk}_1 := \min \{|V_j|, |V_k|\}$ and $n^{jk}_2 := \tbinom{n^{jk}_1}{2}$ and a model parameter $\lambda \in (0,1)$.
Note: The cost $c^{jk}_{vwv'w'}$ is payed if and only if $x_{vw} = x_{v'w'} = 1$, i.e., if the point $v$ is assigned to the point $w$, \emph{and} the point $v'$ is assigned to the point $w'$.
Now, the task of finding a feasible assignment of minimum cost takes the form of the partial quadratic assignment problem
\begin{align}
\min \left\{ \varphi_{c^{jk}}(x) \;\middle|\; x \in X_{V_j V_k} \right\}
\enspace .
\label{eq:assignment}
\end{align}

\paragraph{Cost coefficients}
For any organoid image $j \in J$ and any point $v \in V_j$ in the image, we let $a_v^j \in \mathbb{R}^3$ denote the vector of the three color channels of the image $j$ at the point $v$.
We measure the similarity in appearance of a point $v \in V_j$ in an image $j$ and a point $w \in V_k$ in an image $k$ by the metric $d_{vw} := |a_v^j - a_w^k|$.

For any image $j \in J$, we estimate the barycenter $r^j_0 \in \mathbb{R}^2$ of the organoid in the image plane, as well as the extent $\sigma_0^j \in \mathbb{R}^+$ of the organoid, i.e., the furthest $l_2$-distance of any point of the organoid from its barycenter, as described in \Cref{appendix:assignment-barycenter-extent}.
For any point $v \in V_j$, we consider its coordinates $r^j_v \in \mathbb{R}^2$ in the image plane, as well as the distance between $r^j_v$ and the barycenter, relative to the extent of the organoid, i.e., $\sigma_v^j := |r_v^j - r_0^j|/\sigma_0^j$.
For any pair of points $v,v' \in V_j$, we consider the angle $\alpha^j_{vv'} := \angle(r_v^j - r_0^j, r_{v'}^j - r_0^j)$.
For any pair of images $j,k \in J$, any point $v \in V_j$ and any point $w \in V_k$, we define $d'_{vw} := |\sigma_v^j - \sigma_w^k|$.
For any pair of images $j,k \in J$, any distinct points $v,v' \in V_j$ and any distinct points $w,w' \in V_k$, we define $d''_{vwv'w'} := |\alpha^j_{vv'} - \alpha^k_{ww'}|$.
We justify these definitions in \Cref{section:geometric-consistency}.

With model parameters $\delta, \delta', \delta''  \in \mathbb{R}^+$ and $\theta \in (0,1)$, we define the coefficients in \eqref{eq:assignment-objective} as
\begin{align}
c^{jk}_{vw} & := \theta \, (d_{vw} - \delta) + (1 - \theta) \, (d'_{vw} - \delta') \\
c^{jk}_{vwv'w'} & := d''_{vwv'w'} - \delta'' 
\enspace .
\end{align}
According to these definitions, distances $d_{vw}, d'_{vw}, d''_{vwv'w'}$ greater than the respective thresholds $\delta, \delta', \delta''$ result in positive costs associated with the assignments of $v$ to $w$ and $v'$ to $w'$.
Distances less than the thresholds result in negative costs (i.e., rewards) associated with these assignments.

\paragraph{Normalization}
For any pair of images $j, k \in J$ and the costs $\varphi_{c^{jk}}(x^{jk})$ and $\varphi_{c^{kj}}(x^{kj})$ of solutions $x^{jk}$ and $x^{kj}$ to the instances of the partial quadratic assignment problems between these images, we consider the real number
%
%More specifically, we introduce an additional model parameter $\delta''' \in [0,1]$ and define
\begin{align}
	\phi_{\{j,k\}} := 
	- \frac
	{\min \left\{\varphi_{c^{jk}}(x^{jk}), \varphi_{c^{kj}}(x^{kj}) \right\}}
	{(1 - \lambda) (\theta \delta + (1-\theta) \delta') + \lambda \delta''}
	\enspace .
	\label{eq:correlation-clustering-coefficient}
\end{align}
In \eqref{eq:correlation-clustering-coefficient}, the minimum merely serves the purpose of defining $\phi_{\{j,k\}}$ by a form that is invariant under transposition of $j$ and $k$. 
In theory, $\varphi_{c^{jk}}(x^{jk}) = \varphi_{c^{kj}}(x^{kj})$ because the partial quadratic assignment problem and the cost function we define for this problem are invariant under transposition of $j$ and $k$.
Also in \eqref{eq:correlation-clustering-coefficient}, the denominator is a normalizing constant that only depends on model parameters.
As we show in \Cref{section:geometric-consistency}, the number $\phi_{\{j,k\}}$ is confined to the interval $[0, 1]$ and is \emph{comparable across instances} of the partial quadratic assignment problem.

\subsection{Correlating organoid images by a twin network}
\label{section:models-twin-network}

\begin{figure}[t]
	\centering
	\small
	\begin{tikzpicture}	
		\newcommand\w{2.0};
		\newcommand\h{1};
		\newcommand\compacthdots{\hbox to 1em{.\hss.\hss.}}
		\node[align=center] at (-1.2, 1.0) {
			\includegraphics[width=1cm, height=1cm]{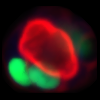}
		};
		\node at (-1.2, 1.7) {$z_j$};
		\draw[thick,->] (-0.6, 1.0) -- ++(0.5, 0.0);
		\node[align=center] at (-1.2, -1.0) {
			\includegraphics[width=1cm, height=1cm]{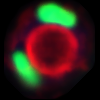}
		};
		\node at (-1.2, -0.3) {$z_k$};
		\draw[thick,->] (-0.6, -1.0) -- ++(0.5, 0.0);
		\draw (0, 0.5) rectangle ++(\w, \h) node[pos=0.5, align=center] {ResNet-18\\$g_\mu$};
		\draw (0, -0.5) rectangle ++(\w, -\h) node[pos=0.5, align=center] {ResNet-18\\$g_\mu$};
		\draw[thick,->] (\w + 0.25, 0.5 + \h/2) -- ++(1.5, 0.0);
		\draw[thick,->] (\w + 0.25, -0.5 - \h/2) -- ++(1.5, 0.0);
		\draw (\w + 2.0, 0.9 + \h / 2) rectangle ++(0.4, 0.4) node[midway] {1};
		\draw (\w + 2.0, 0.5 + \h / 2) rectangle ++(0.4, 0.4) node[midway] {2};
		\draw (\w + 2.0, 0.1 + \h / 2) rectangle ++(0.4, 0.4) node[midway] {\compacthdots};
		\draw (\w + 2.0, -0.3 + \h / 2) rectangle ++(0.4, 0.4) node[midway] {$d$};
		\draw (\w + 2.0, -1.3 - \h / 2) rectangle ++(0.4, 0.4) node[midway] {$d$};
		\draw (\w + 2.0, -0.9 - \h / 2) rectangle ++(0.4, 0.4) node[midway] {\compacthdots};
		\draw (\w + 2.0, -0.5 - \h / 2) rectangle ++(0.4, 0.4) node[midway] {2};
		\draw (\w + 2.0, -0.1 - \h / 2) rectangle ++(0.4, 0.4) node[midway] {1};
		\draw (\w + 3.5, 0.7 + \h / 2) rectangle ++(0.4, 0.4) node[midway] {1};
		\draw (\w + 3.5, 0.3 + \h / 2) rectangle ++(0.4, 0.4) node[midway] {2};
		\draw (\w + 3.5, -0.1 + \h / 2) rectangle ++(0.4, 0.4) node[midway] {\compacthdots};
		\draw (\w + 3.5, -0.5 + \h / 2) rectangle ++(0.4, 0.4) node[midway] {$d$};
		\draw (\w + 2.4, 1.6) -- (\w + 3.5, 1.4);
		\draw (\w + 2.4, 1.2) -- (\w + 3.5, 1.0);
		\draw (\w + 2.4, 0.8) -- (\w + 3.5, 0.6);
		\draw (\w + 2.4, 0.4) -- (\w + 3.5, 0.2);
		\draw (\w + 3.5, -1.1 - \h / 2) rectangle ++(0.4, 0.4) node[midway] {$2d$};
		\draw (\w + 3.5, -0.7 - \h / 2) rectangle ++(0.4, 0.4) node[midway] {\compacthdots};
		\draw (\w + 3.5, -0.3 - \h / 2) rectangle ++(0.4, 0.4) node[midway] {\compacthdots};
		\draw (\w + 3.5, 0.1 - \h / 2) rectangle ++(0.4, 0.4) node[midway] {\compacthdots};
		\draw (\w + 2.4, -1.6) -- (\w + 3.5, -1.4);
		\draw (\w + 2.4, -1.2) -- (\w + 3.5, -1.0);
		\draw (\w + 2.4, -0.8) -- (\w + 3.5, -0.6);
		\draw (\w + 2.4, -0.4) -- (\w + 3.5, -0.2);
		\draw[thick,->] (\w + 4.0, 0.0) -- ++(1.0, 0);
		\draw (\w + 5.1, -\h/2) rectangle ++(\w+0.1, \h) node[pos=0.5, align=center] {Fully\\connected $h_\nu$};
	\end{tikzpicture}
	\caption{In order to map a pair of scaled organoid images $z_j, z_k \in \mathbb{R}^{3 n^2}$ to a real number that is supposed to be positive for images in the same cluster and negative for images in distinct clusters, we learn a twin network
	consisting of a head $g_\mu \colon \mathbb{R}^{3n^2} \to \mathbb{R}^d$ in the form of a ResNet-18 \cite{He2016} with $d = 128$ and adjustable parameters $\mu \in \mathbb{R}^{11,242,176}$, and a base $h_\nu \colon \mathbb{R}^d \times \mathbb{R}^d \to \mathbb{R}$ in the form of one fully connected hidden layer and one output node, with adjustable parameters $\nu \in \mathbb{R}^{33,025}$.}
	\label{figure:siamese-network-sketch}
\end{figure}

The second idea we pursue in order to compare one image of a first organoid to another image of a second organoid is to learn from a collection of organoid images clustered by biologists a map from any \emph{pair} of organoid images to a real number that is positive for images in the same cluster and negative for images in distinct clusters (at loss zero).

To this end, we scale every organoid image to fit into a grid of $n \cdot n$ pixels, with $n = 256$, centered and padded with zeroes in case the image is not square.
For any organoid image $j \in J$, we let $z_j \in \mathbb{R}^{3 n^2}$ denote the vector of the intensities of the three color channels of all pixels in the scaled image. 
For any distinct organoid images $j, k \in J$, we let $z_{jk}$ denote the concatenation of $z_j$ and $z_k$.
We define a function that maps any $z_{jk}$ to a real number that is supposed to be positive for images in the same cluster and negative for images in distinct clusters.
This function has the form of the twin network \cite{chicco-2021} depicted in \Cref{figure:siamese-network-sketch}.
It consists of a head $g_\mu \colon \mathbb{R}^{3n^2} \to \mathbb{R}^d$ in the form of a ResNet-18 \cite{He2016} with $d = 128$ and adjustable parameters $\mu \in \mathbb{R}^{11,242,176}$, and a base $h_\nu \colon \mathbb{R}^d \times \mathbb{R}^d \to \mathbb{R}$ in the form of one fully connected hidden layer and one output node, with adjustable parameters $\nu \in \mathbb{R}^{33,025}$.
Overall, this twin network defines a function $\tau_\theta \colon \mathbb{R}^{6n^2} \to \mathbb{R}$ with adjustable parameters $\theta := (\mu, \nu)$ such that for any pair of organoid images $j,k \in J$, we have $\tau_\theta(z_{j k}) = h_\nu(g_\mu(z_j), g_\mu(z_{k}))$.

\subsection{Correlating organoid images by correlating their histograms}
\label{section:models-histogram}

A third idea we pursue in order to compare one image of a first organoid to another image of a second organoid is to compare the histograms of these images.

To this end, we define a twin network analogous to the one described in \Cref{section:models-twin-network} and depicted in \Cref{figure:siamese-network-sketch}, except that here, the input consists of the histograms $h_j, h_k \in \{0,\dots,255\}^3$ of two images $j,k \in J$.
The head of this twin network has the form of a ResNet-18 \cite{He2016} with $d = 128$, one-dimensional convolutions and adjustable parameters $\mu' \in \mathbb{R}^{3,910,464}$.
The base has exactly the form described in \Cref{section:models-twin-network}, with adjustable parameters $\nu' \in \mathbb{R}^{33,025}$.
We write $\theta' = (\mu', \nu')$ for the parameters of this network and write $\tau'_{\theta'}$ for the function defined by this network.

A fourth idea we implement as a baseline in order to compare two images of organoids by comparing their histograms is to compute the Hellinger distance \cite{hellinger-1909} between these histograms.

\subsection{Clustering organoid images by correlation clustering}
\label{section:models-clustering}
The idea we pursue in order to cluster a collection $J$ of organoid images is to consider for any distinct images $j,k \in J$ a cost $q_{\{j,k\}} \in \mathbb{R}$ that is positive for images of similar organoids and negative for images of dissimilar organoids, and to search for a partition (clustering) of the set $J$ that minimizes the sum of costs of those pairs of images that are in distinct clusters.

\paragraph{Feasible solutions}
We consider as feasible solutions all partitions of the set $J$ of organoid images.
We encode any partition $\Pi$ of $J$ by the vector $y \colon \tbinom{J}{2} \to \{0,1\}$ such that for any pair $\{j,k\} \in \tbinom{J}{2}$ of distinct images, we have $y_{\{j,k\}} = 1$ if and only if $j$ and $k$ are in distinct clusters, i.e., if $\forall U \in \Pi \colon \{j,k\} \not\subseteq U$.
The set $Y_J$ of the encoding vectors of all partitions is written below, with transitivity expressed in the form of linear inequalities.
\begin{align*}
Y_J = \bigg\{
y \colon\! \tbinom{J}{2} \!\to\! \{0,1\}
\,\bigg|\,
\forall j \in J \,
\forall k \in J \!\setminus\! \{j\} \,
\forall l \in J \!\setminus\! \{j,k\}
\colon
y_{\{j, l\}} \leq y_{\{j,k\}} + y_{\{k,l\}}
\bigg\}
\end{align*}

\paragraph{Correlation clustering problem}
For any pair of distinct images $j,k \in J$, we define below a cost $q_{\{j,k\}} \in \mathbb{R}$ associated with $j$ and $k$ being in distinct clusters.
For any feasible solution $y \in Y_J$, we define its cost $\xi_q(y) \in \mathbb{R}$ as
\begin{align}
\xi_q(y) = \sum_{\{j,k\} \in \tbinom{J}{2}} q_{\{j,k\}} \, y_{\{j,k\}}
\enspace .
\end{align}

Hence, the task of finding an optimal partition of the image collection $J$ assumes the form of the correlation clustering problem
\begin{align}
	\label{eq:clustering-objective}
\min \left\{
\xi_{q}(y)
\ \middle|\ 
y \in Y_J 
\right\}
\enspace .
\end{align}
Note: The number, size and size ratio of clusters is neither constrained nor assigned a cost explicitly in this problem.
Instead, these properties are determined by its solutions.

\paragraph{Cost coefficients}
We consider four alternatives of defining the cost $q_{\{j,k\}}$ associated with a pair of distinct images $j,k \in J$ being in distinct clusters, one for each method we consider for correlating organoid images:

When correlating organoid images by partial quadratic assignment, as described in \Cref{section:models-assignment}, we define $q_{\{j, k\}} := \phi_{\{j, k\}} - \delta'''$, with an additional model parameter $\delta'''\in \left[0, 1\right]$.
When correlating organoid images by a twin network $\tau_\theta$, as described in \Cref{section:models-twin-network}, we define $q_{\{j, k\}} := \frac{1}{2} (\tau_\theta(z_{j k}) + \tau_\theta(z_{k j}))$.
When correlating organoid images $j$ and $k$ via their histograms $h_j$ and $h_k$ by a twin network $\tau'_{\theta'}$, as described in \Cref{section:models-histogram}, we define $q_{\{j, k\}} := \frac{1}{2} (\tau'_{\theta'}(h_{j k}) + \tau'_{\theta'}(h_{k j}))$ with $h_{jk}$ the concatenation of the histograms $h_j$ and $h_k$.
When correlating organoid images $j$ and $k$ via their histograms $h_j$ and $h_k$ by the Hellinger distance $d_H(h_j, h_k)$, as described in \Cref{section:models-histogram}, we define $q_{\{j, k\}} := 1 - d_H(h_j, h_k) - \delta'''$, with an additional model parameter $\delta''' \in [0,1]$.

\section{Algorithms}

\subsection{Inference}
\label{section:algorithms-inference}

\paragraph{Partial quadratic assignment problem}
We search for feasible solutions to the instances defined in \Cref{section:models-assignment} of the partial quadratic assignment problem by means of a local search algorithm that exploits the geometry of these instances:

For any two images $j,k \in J$, we assume (hypothetically) that a geometrically consistent feasible assignment $x \in X_{V_j V_k}$ as defined in \Cref{section:geometric-consistency} exists and consider the transformation of the image plane defined by $x$ according to \Cref{definition:consistency}.
This transformation is composed of a translation, a rotation and a scaling.
We estimate the translation from the estimates of the barycenters of the organoids.
We estimate the scaling from the estimates of the extents of the organoids.
For any rotation angle $\gamma$, we thus obtain one transformation $f_\gamma$.

For any angle $\gamma \in \{\frac{2 \pi n}{N}\}_{n \in \{0,\ldots,N-1\}}$, we construct a feasible assignment $x^{\gamma} \in X_{V_{j}V_{k}}$ as follows:
For any $v \in V_j$, we constrain the set of candidates to a subset $V_{jk} \subseteq V_k$ of cardinality $|V_{jk}| = \lfloor |V_k|/M \rfloor$ for which $\sum_{w \in V_{jk}} \| f_\gamma(r^j_v) - r^k_w\|_{2}$ is minimal.
Starting from $x = 0$, we greedily assign pairs of unassigned points $v \in V_j$ and $w \in V_{jk}$ for which switching $x^\gamma_{vw}$ to one reduces $\varphi_{c^{jk}}(x^\gamma)$ maximally, until even the best additional assignment would increase $\varphi_{c^{jk}}(x^\gamma)$.
We output $x^\gamma$ for a $\gamma$ that minimizes $\varphi_{c^{jk}}(x^\gamma)$.
In our experiments, $N = 75$ and $M = 10$.

\paragraph{Correlation clustering problem}
We solve globally all instances of the correlation clustering problem for our quantitative experiments, by separating precisely the inequalities from the definition of $Y_J$ in the integer linear programming procedure of Gurobi~\cite{gurobi}.
We solve locally the instances of the correlation clustering problem for our qualitative experiments in \Cref{appendix:experiments-qualitative}, by means of \cite[Algorithm~2]{keuper-2015a}.

\subsection{Learning}
\label{section:algorithms-learning}
\paragraph{Partial quadratic assignment problem}
In order to learn the parameters $\delta, \delta', \delta'', \theta, \lambda$ of the instances defined in \Cref{section:models-assignment} of the partial quadratic assignment problem, as well as the parameter $\delta'''$, we classify pairs of organoid images $j$ and $k$ independently as being of the same organoid class, if $\phi_{\{j,k\}} \geq \delta'''$, and of distinct organoid classes, if $\phi_{\{j,k\}} < \delta'''$.
Our learning objective is to minimize the $F_1$ score of this classification with respect to all pairs of distinct images of the training set, and the decisions by biologists whether the images belong to the same cluster or distinct clusters.
We perform this minimization locally, by simulated annealing:
In Iteration $t = 0$, we initialize the parameters, solve all $\tbinom{|J|}{2}$ assignment problems locally, as described in \Cref{section:algorithms-inference}, classify pairs of images independently, as described above, and compute the $F_1$ score $F_1^{(0)}$.
In every successive iteration $t$, we draw for every parameter a number $u$ from a normal distribution with zero mean and standard deviation $\kappa \in \mathbb{R}$, add $u$ to that parameter, solve all $\tbinom{|J|}{2}$ assignment problems locally, as described in \Cref{section:algorithms-inference}, classify pairs of images independently and compute the $F_1$ score $F_1^{(t)}$.
If $F_1^{(t)} > F_1^{(t-1)}$, we accept the parameter update.
Otherwise, we accept the parameter update with probability $e^{(F_1^{(t)} - F_1^{(t-1)}) / T_t}$, where $T_t \in \mathbb{R}^+$ is called the temperature at iteration $t$ that we adjust according to a geometric cooling schedule $T_t = \beta \cdot T_{t-1}$.
If the parameter update is revoked we maintain $F_1^{(t)} := F_1^{(t-1)}$.
%Furthermore we employ a reset strategy. A reset is performed if during the last $k$ iterations no improvement of the $F_1$ score was achieved. 
%To perform a reset the current set of parameters are set to those parameters which correspond to the currently maximal $F_1$ score.  
%After a maximum number of iterations $t_\text{max}$, we output parameters for which the $F_1$ score is maximal over all iterations.
After an iteration limit $t_{max}$ is reached, we output parameters for which the $F_1$ score is maximal over all iterations.
In our experiments, we start with $\delta = \delta' = \delta'' = 0.2$, $\lambda = \theta = 0.5$, $T_0 = 0.3$ and fix $\kappa = 0.1$, $\beta = 0.99, t_{max} = 140 \left(\text{8 hours}\right)$.%, k = 10$.

\paragraph{Twin network}
In order to learn the parameters $\theta$ of the twin network $\tau_\theta$, our objective is to minimize the logistic loss with respect to all pairs of distinct images of the training set, and the decisions by biologists whether the images belong to the same cluster or distinct clusters.
In our experiments, we solve this problem locally, by means of stochastic gradient descent, with an adaptive learning rate.
More specifically, we employ AdamW~\cite{loshchilov2018decoupled} with mini-batches, each  consisting of $64$ pairs of images, $32$ pairs where both images are from the same cluster, and $32$ pairs where the two images are from distinct clusters.
The initial learning rate is $10^{-4}$.
The number of iterations is $6\,000$. 
Optionally, for data augmentation, we apply the following transformations to images, each with a probability of $\frac{1}{5}$, independently:
Reflection about the horizontal axis, reflection about the vertical axis, rotation about the origin by an angle drawn uniformly at random from $\left[0, 2\pi\right)$.
In order to learn the parameters $\theta'$ of the twin network $\tau'_{\theta'}$, the procedure is exactly analogous, except that we do not augment histograms.

\paragraph{Hellinger Distance}
In order to estimate the threshold $\delta'''\in \left[0, 1\right]$ on the Hellinger Distance, we classify pairs of organoid images $j$ and $k$ independently as being of the same organoid class, if $d_H(h_j, h_k) \leq 1 - \delta'''$, and of distinct organoid classes, if $d_H(h_j, h_k) > 1 - \delta'''$.
Our learning objective is to maximize the $F_1$ score of this classification with respect to a training set of organoid images clustered manually by biologists.
We perform this maximization by exhaustive search over all $\delta''' \in \{\frac{n}{N'}\}_{n \in \{0,\dots,N'\}}$ with $N' = 100$.

\section{Experiments}
\label{section:experiments}

\subsection{Data set of organoid images with annotations}
We report below on experiments with respect to four sets of organoid images that we refer to as \emph{Train-100}, \emph{Test-100}, \emph{Test-30} and \emph{Unlabeled-1000}
(see \Cref{figure:image-collection,figure:clustering_results} and \Cref{appendix:experiments-qualitative}).
Each of these sets consists of light microscopy images of individual organoids we have extracted from larger images by means of \cite{schmidt-2018}, as described in \Cref{appendix:organoid-segmentation}.
\emph{Train-100} and \emph{Test-100} each contain 10 organoid images of each of 10 distinct organoid classes identified by biologists based on the appearance of organoids in the image collection.
\emph{Test-30} contains another 10 images of each of 3 additional organoid classes not contained in \emph{Train-100} or \emph{Test-100}.
We use \emph{Train-100} exclusively for learning.
We use \emph{Test-100} for measuring the accuracy of the classification of image pairs and of the clustering of images, with respect to a set of images not seen during learning, of organoid classes seen during learning.
We use \emph{Test-30} for measuring the accuracy with respect to a set of images not seen during learning, of organoid classes not seen during learning.
We use \emph{Unlabeled-1000} for reporting qualitatively in \Cref{appendix:experiments-qualitative} the clustering on an unlabeled set of 1000 images of organoids of unknown classes.
All four sets of organoid images are included as supplementary material, along with the annotations of the first three.

\subsection{Correlating organoid images}
\label{section:experiments-assignments}

\begin{table}[!b]
	\centering\scriptsize
	\begin{tabular}{ll@{\ \ }rrrrr@{\ \ }rrrrrrrrrr}
	\toprule
	Test & Model & \multicolumn{5}{l}{Classification of pairs}  & \multicolumn{10}{l}{Correlation clustering} \\
	data &       & ACC & PC & RC & PJ & RJ     & RI & VI & VI$_{\textnormal{C}}$ & VI$_{\textnormal{J}}$ & PC & RC & PJ & RJ & F\textsubscript{1}C & F\textsubscript{1}J\\
	\midrule
	100 & PQAP & 95.0 & 97.7 & 96.8 & 70.7 & 77.1 & 97.2 & 0.68 & 0.53 & 0.15 & 97.7 & 99.3 & 92.0 & 76.7 & 98.5 & 83.7 \\
	& TNIa & \textbf{98.9} & \textbf{99.6} & \textbf{99.2} & \textbf{92.5} & \textbf{95.8} & \textbf{99.4} & \textbf{0.14} & \textbf{0.09} & \textbf{0.05} & \textbf{99.6} & \textbf{99.8} & \textbf{97.7} & \textbf{96.0} & \textbf{99.7} & \textbf{96.9} \\
	& TNI & 92.6 & 99.0 & 98.4 & 84.7 & 89.8 & 98.2 & 0.38 & 0.18 & 0.20 & 99.1 & 98.9 & 89.5 & 91.3 & 99.0 & 90.4 \\
	& TNH & 92.6 & 96.6 & 95.2 & 58.0 & 66.7 & 93.0 & 1.58 & 0.92 & 0.67 & 96.0 & 96.3 & 62.0 & 59.8 & 96.2 & 60.9 \\
	& $d_H$ & 90.1 & 95.1 & 93.9 & 46.0 & 51.8 & 91.6 & 1.76 & 1.19 & 0.57 & 95.1 & 95.8 & 54.2 & 50.2 & 95.4 & 52.1 \\
	\midrule
	30 & PQAP & 71.3 & \textbf{91.1} & 64.7 & 52.3 & \textbf{85.9} & 77.2 & \textbf{0.79} & \textbf{0.16} & 0.63 & \textbf{95.9} & 70.0 & 58.3 & \textbf{98.3} & 80.9 & \textbf{73.2} \\
	& TNIa & \textbf{82.1} & 88.8 & 84.7 & \textbf{69.1} & 76.3 & \textbf{80.2} & 1.16 & 0.72 & \textbf{0.44} & 86.1 & 85.0 & \textbf{67.6} & 69.6 & \textbf{85.6} & 68.6\\
	& TNI & 78.4 & 89.3 & 78.0 & 61.8 & 79.3 & 72.9 & 1.20 & 0.60 & 0.59 & 85.3 & 73.3 & 54.8 & 71.9 & 78.9 & 62.2 \\
	& TNH & 78.9 & 83.8 & \textbf{86.0} & 66.9 & 63.0 & 77.5 & 1.49 & 1.01 & 0.48 & 81.4 & \textbf{87.3} & 66.4 & 55.6 & 84.2 & 60.5 \\
	& $d_H$ & 75.4 & 85.1 & 78.0 & 58.9 & 69.6 & 78.6 & 1.31 & 0.74 & 0.58 & 84.4 & 84.7 & 65.6 & 65.2 & 84.5 & 65.4 \\
	\midrule
	100/30 & PQAP & \textbf{92.8} & 100.0 & \textbf{92.8} & 0.0 & - & 94.8 & 1.24 & 0.64 & \textbf{0.60} & 97.8 & \textbf{96.7} & \textbf{61.5} & 70.6 & 97.2 & 65.8 \\
	& TNIa & 91.1 & 100.0 & 91.1 & 0.0 & - & \textbf{95.3} & \textbf{0.88} & \textbf{0.27} & 0.61 & \textbf{99.2} & 95.8 & 61.2 & \textbf{89.2} & \textbf{97.4} & \textbf{72.6} \\
	& TNI & 90.7 & 100& 90.7 & 0.0 & - & 94.7 & 1.09 & 0.38 & 0.72 & 98.7 & 95.5 & 58.3 & 83.1 & 97.1 & 68.5 \\
	& TNH & 90.1 & 100.0 & 90.1 & 0.0 & - & 92.0 & 2.13 & 1.05 & 1.08  & 96.6 & 94.8 & 44.1 & 54.9 & 95.7 & 48.9 \\ 
	& $d_H$ & 85.9 & 100.0 & 85.9 & 0.0 & - & 90.3 & 2.62 & 1.44 & 1.19 & 95.9 & 94.0 & 33.9 & 40.7 & 94.8 & 37.0 \\
	\bottomrule
	\end{tabular}
	\\[1ex]
	\caption{We report above how close \emph{classifications} of pairs of organoid images and \emph{clusterings} of organoid images are to decisions by biologists, on images not seen during training, of organoid classes seen during training (Test-100), organoid classes not seen during training (Test-30) and a combination (Test-100/30). 
	We compare the partial quadratic assignment problem (PQAP), the twin networks for images with and without data augmentation (TNI, TNIa), the twin networks for histograms (TNH) and the Hellinger distance ($d_H$).
	We abbreviate the accuracy (ACC), precision (P), recall (R), cuts (C), joins (J), Rand's index (RI), variation of information (VI) and F$_1$-score (F$_1$).}
	\label{table:independent-decisions}
\end{table}

Toward the classification of pairs of organoid images as belonging to the same or distinct clusters, we proceed in two steps:
Firstly, we learn the models defined in \Cref{section:models-assignment,section:models-twin-network,section:models-histogram} from the set \emph{Train-100} by the algorithms described in \Cref{section:algorithms-learning}.
Secondly, we apply these learned models in order to infer independently for pairs of images whether the images belong to the same or distinct clusters, for the sets \emph{Test-100}, \emph{Test-30} and the combined set \emph{Test-100/30}, by the algorithms described in \Cref{section:algorithms-inference}.
We report in \Cref{table:independent-decisions} and \Cref{figure:pr-curves} (left) how close these classifications are to classifications by biologists, on images not seen during learning, of organoid classes seen during learning (Test-100), organoid classes not seen during learning (Test-30) and a combination of both (Test-100/30).

It can be seen from \Cref{table:independent-decisions} that the accuracy is above 90\% for all models, for organoid classes seen during learning (\emph{Test-100}).
The accuracy is lower, about 70\%-80\% depending on the model, for organoid classes not seen during learning (\emph{Test-30}).
The accuracy is between these numbers for pairs of organoid images where one image is from a class seen during learning, and the other is not.
Closest to the truth by an accuracy of 98.9\% and 82.1\% on Test-100 and Test-30, respectively, are the classifications by the twin network learned with data augmentation.
The partial quadratic assignment problem is the most accurate model (92.8\%) for separating the organoid images in Test-100 from those in Test-30. 
It is also the least accurate model (71.3\%) on Test-30.
From the left column of \Cref{figure:pr-curves}, we see:
With respect to cuts, all models are accurate.
With respect to joins, both the twin network learned on cumulative histograms as well as the Hellinger distance are almost uninformative, while the other models are informative. 
With respect to Test-100, the twin network is the most accurate model across all metrics. 
With respect to Test-30, the partial quadratic assignment problem has a higher recall of joins and precision of cuts, by a margin of 9.6\% and 2.3\%, respectively. 
At the same time, the twin network has a higher precision of joins and recall of cuts, by a margin of 16.8\% and 20\%, respectively.
Computation times are discussed in \Cref{section:runtimes}.

\newcommand{\myaddplotpr}[6]{
	\addplot+[
		only marks,
		mark=#6,
		#5
	] table[
		x expr=\thisrow{#2}, 
		y expr=\thisrow{#3}, 
		col sep=comma
	] {#1};
	\ifthenelse { \equal {#4} {} }
		{}
		{\addlegendentry{\scriptsize #4};}
}

\begin{figure}
	\centering
	\small
	\begin{minipage}{0.45\linewidth}
		a)
		
		\begin{tikzpicture}
			\pgfplotsset{
				width=\linewidth,
				height=\linewidth
			}
			\begin{axis}[
					xlabel={Recall classif.~Test-100},
					ylabel={Precision classif.~Test-100},
					xmin=0,
					xmax=1.03,
					ymin=0,
					ymax=1.03,
					legend pos=south west,
					legend cell align={left}
				]
				\myaddplotpr{data/pr_curves/seen/data-geometry-joins-roc.csv}{recall}{precision}{PQAP joins}{\mycolorPQAP}{x}
				\myaddplotpr{data/pr_curves/seen/data-geometry-cuts-roc.csv}{recall}{precision}{PQAP cuts}{\mycolorPQAP}{+}	
				\myaddplotpr{data/pr_curves/seen/data-siamese-joins-roc-augment.csv}{recall}{precision}{TNIa joins}{\mycolorTNIa}{x}
				\myaddplotpr{data/pr_curves/seen/data-siamese-cuts-roc-augment.csv}{recall}{precision}{TNIa cuts}{\mycolorTNIa}{+}				
				\myaddplotpr{data/pr_curves/seen/data-siamese-joins-roc-no-augment.csv}{recall}{precision}{TNI joins}{\mycolorTNI}{x}
				\myaddplotpr{data/pr_curves/seen/data-siamese-cuts-roc-no-augment.csv}{recall}{precision}{TNI cuts}{\mycolorTNI}{+}
				\myaddplotpr{data/pr_curves/seen/data-siamese-joins-roc-cumulative-hist.csv}{recall}{precision}{TNH joins}{\mycolorTNH}{x}
				\myaddplotpr{data/pr_curves/seen/data-siamese-cuts-roc-cumulative-hist.csv}{recall}{precision}{TNH cuts}{\mycolorTNH}{+}
				\myaddplotpr{data/pr_curves/seen/data-histogram-joins-cuts-roc.csv}{recall_joins}{precision_joins}{$d_H$ joins}{\mycolorDH}{x}
				\myaddplotpr{data/pr_curves/seen/data-histogram-joins-cuts-roc.csv}{recall_cuts}{precision_cuts}{$d_H$ cuts}{\mycolorDH}{+}
			\end{axis}
		\end{tikzpicture}
	\end{minipage}%
	\begin{minipage}{0.45\linewidth}
		b)
		
		\begin{tikzpicture}
			\pgfplotsset{
				width=\linewidth,
				height=\linewidth
			}
			\begin{axis}[
					xlabel={Recall clustering Test-100},
					ylabel={Precision clustering Test-100},
					xmin=0,
					xmax=1.03,
					ymin=0,
					ymax=1.03,
					legend pos=south west,
					legend cell align={left}
				]
				\myaddplotpr{data/pr_curves/seen_clustered/data-siamese-roc-no-augment-clustered.csv}{recallJoins}{precisionJoins}{}{\mycolorTNI}{x}
				\myaddplotpr{data/pr_curves/seen_clustered/data-siamese-roc-no-augment-clustered.csv}{recallCuts}{precisionCuts}{}{\mycolorTNI}{+}
				\myaddplotpr{data/pr_curves/seen_clustered/data-siamese-roc-augment-clustered.csv}{recallJoins}{precisionJoins}{}{\mycolorTNIa}{x}
				\myaddplotpr{data/pr_curves/seen_clustered/data-siamese-roc-augment-clustered.csv}{recallCuts}{precisionCuts}{}{\mycolorTNIa}{+}
				\myaddplotpr{data/pr_curves/seen_clustered/data-siamese-roc-cumulative-hist-clustered.csv}{recallJoins}{precisionJoins}{}{\mycolorTNH}{x}
				\myaddplotpr{data/pr_curves/seen_clustered/data-siamese-roc-cumulative-hist-clustered.csv}{recallCuts}{precisionCuts}{}{\mycolorTNH}{+}
				\myaddplotpr{data/pr_curves/seen_clustered/data-geometry-roc-clustered.csv}{recall_joins}{precision_joins}{}{\mycolorPQAP}{x}
				\myaddplotpr{data/pr_curves/seen_clustered/data-geometry-roc-clustered.csv}{recall_cuts}{precision_cuts}{}{\mycolorPQAP}{+}
				\myaddplotpr{data/pr_curves/seen_clustered/data-histogram-joins-cuts-roc-clustered.csv}{recall_joins}{precision_joins}{}{\mycolorDH}{x}
				\myaddplotpr{data/pr_curves/seen_clustered/data-histogram-joins-cuts-roc-clustered.csv}{recall_cuts}{precision_cuts}{}{\mycolorDH}{+}
			\end{axis}
		\end{tikzpicture}
	\end{minipage}
	\begin{minipage}{0.45\linewidth}
		c)
		
		\begin{tikzpicture}
			\pgfplotsset{
				width=\linewidth,
				height=\linewidth
			}
			\begin{axis}[
					xlabel={Recall classif.~Test-30},
					ylabel={Precision classif.~Test-30},
					xmin=0,
					xmax=1.03,
					ymin=0,
					ymax=1.03,
					legend pos=south west,
					legend cell align={left}
				]		
				\myaddplotpr{data/pr_curves/unseen/data-siamese-joins-roc-no-augment-unseen.csv}{recall}{precision}{}{\mycolorTNI}{x}
				\myaddplotpr{data/pr_curves/unseen/data-siamese-cuts-roc-no-augment-unseen.csv}{recall}{precision}{}{\mycolorTNI}{+}
				\myaddplotpr{data/pr_curves/unseen/data-siamese-joins-roc-augment-unseen.csv}{recall}{precision}{}{\mycolorTNIa}{x}
				\myaddplotpr{data/pr_curves/unseen/data-siamese-cuts-roc-augment-unseen.csv}{recall}{precision}{}{\mycolorTNIa}{+}			
				\myaddplotpr{data/pr_curves/unseen/data-siamese-joins-roc-cumulative-hist-unseen.csv}{recall}{precision}{}{\mycolorTNH}{x}
				\myaddplotpr{data/pr_curves/unseen/data-siamese-cuts-roc-cumulative-hist-unseen.csv}{recall}{precision}{}{\mycolorTNH}{+}
				\myaddplotpr{data/pr_curves/unseen/data-geometry-joins-roc-unseen.csv}{recall}{precision}{}{\mycolorPQAP}{x}
				\myaddplotpr{data/pr_curves/unseen/data-geometry-cuts-roc-unseen.csv}{recall}{precision}{}{\mycolorPQAP}{+}
				\myaddplotpr{data/pr_curves/unseen/data-histogram-joins-cuts-roc-unseen.csv}{recall_joins}{precision_joins}{}{\mycolorDH}{x}
				\myaddplotpr{data/pr_curves/unseen/data-histogram-joins-cuts-roc-unseen.csv}{recall_cuts}{precision_cuts}{}{\mycolorDH}{+}				
			\end{axis}
		\end{tikzpicture}
	\end{minipage}%
	\begin{minipage}{0.45\linewidth}
		d)
		
		\begin{tikzpicture}
			\pgfplotsset{
				width=\linewidth,
				height=\linewidth
			}
			\begin{axis}[
					xlabel={Recall clustering Test-30},
					ylabel={Precision clustering Test-30},
					xmin=0,
					xmax=1.03,
					ymin=0,
					ymax=1.03,
					legend pos=south west,
					legend cell align={left}
				]
				\myaddplotpr{data/pr_curves/unseen_clustered/data-siamese-roc-no-augment-clustered-unseen.csv}{recallJoins}{precisionJoins}{}{\mycolorTNI}{x}
				\myaddplotpr{data/pr_curves/unseen_clustered/data-siamese-roc-no-augment-clustered-unseen.csv}{recallCuts}{precisionCuts}{}{\mycolorTNI}{+}
				\myaddplotpr{data/pr_curves/unseen_clustered/data-siamese-roc-augment-clustered-unseen.csv}{recallJoins}{precisionJoins}{}{\mycolorTNIa}{x}
				\myaddplotpr{data/pr_curves/unseen_clustered/data-siamese-roc-augment-clustered-unseen.csv}{recallCuts}{precisionCuts}{}{\mycolorTNIa}{+}
				\myaddplotpr{data/pr_curves/unseen_clustered/data-siamese-roc-cumulative-hist-clustered-unseen.csv}{recallJoins}{precisionJoins}{}{\mycolorTNH}{x}
				\myaddplotpr{data/pr_curves/unseen_clustered/data-siamese-roc-cumulative-hist-clustered-unseen.csv}{recallCuts}{precisionCuts}{}{\mycolorTNH}{+}
				\myaddplotpr{data/pr_curves/unseen_clustered/data-geometry-roc-clustered-unseen.csv}{recall_joins}{precision_joins}{}{\mycolorPQAP}{x}
				\myaddplotpr{data/pr_curves/unseen_clustered/data-geometry-roc-clustered-unseen.csv}{recall_cuts}{precision_cuts}{}{\mycolorPQAP}{+}
				\myaddplotpr{data/pr_curves/unseen_clustered/data-histogram-joins-cuts-roc-unseen-clustered.csv}{recall_joins}{precision_joins}{}{\mycolorDH}{x}
				\myaddplotpr{data/pr_curves/unseen_clustered/data-histogram-joins-cuts-roc-unseen-clustered.csv}{recall_cuts}{precision_cuts}{}{\mycolorDH}{+}
			\end{axis}
		\end{tikzpicture}
	\end{minipage}
	\begin{minipage}{0.45\linewidth}
		e)

		\begin{tikzpicture}
			\pgfplotsset{
				width=\linewidth,
				height=\linewidth
			}
			\begin{axis}[
					xlabel={Recall classif.~Test-100/30},
					ylabel={Precision classif.~Test-100/30},
					xmin=0,
					xmax=1.03,
					ymin=0,
					ymax=1.03,
					legend pos=south west,
					legend cell align={left}
				]
				\myaddplotpr{data/pr_curves/seen_unseen/data-siamese-joins-roc-no-augment-unseen+seen.csv}{recall}{precision}{}{\mycolorTNI}{x}
				\myaddplotpr{data/pr_curves/seen_unseen/data-siamese-cuts-roc-no-augment-unseen+seen.csv}{recall}{precision}{}{\mycolorTNI}{+}
				\myaddplotpr{data/pr_curves/seen_unseen/data-siamese-joins-roc-augment-unseen+seen.csv}{recall}{precision}{}{\mycolorTNIa}{x}
				\myaddplotpr{data/pr_curves/seen_unseen/data-siamese-cuts-roc-augment-unseen+seen.csv}{recall}{precision}{}{\mycolorTNIa}{+}			
				\myaddplotpr{data/pr_curves/seen_unseen/data-siamese-joins-roc-cumulative-hist-unseen+seen.csv}{recall}{precision}{}{\mycolorTNH}{x}
				\myaddplotpr{data/pr_curves/seen_unseen/data-siamese-cuts-roc-cumulative-hist-unseen+seen.csv}{recall}{precision}{}{\mycolorTNH}{+}
				\myaddplotpr{data/pr_curves/seen_unseen/data-geometry-joins-roc-combined.csv}{recall}{precision}{}{\mycolorPQAP}{x}
				\myaddplotpr{data/pr_curves/seen_unseen/data-geometry-cuts-roc-combined.csv}{recall}{precision}{}{\mycolorPQAP}{+}
				\myaddplotpr{data/pr_curves/seen_unseen/data-histogram-roc-combined.csv}{recall_joins}{precision_joins}{}{\mycolorDH}{x}
				\myaddplotpr{data/pr_curves/seen_unseen/data-histogram-roc-combined.csv}{recall_cuts}{precision_cuts}{}{\mycolorDH}{+}				
			\end{axis}
		\end{tikzpicture}
	\end{minipage}%
	\begin{minipage}{0.45\linewidth}
		f)
		
		\begin{tikzpicture}
			\pgfplotsset{
				width=\linewidth,
				height=\linewidth
			}
			\begin{axis}[
					xlabel={Recall clustering Test-100/30},
					ylabel={Precision clustering Test-100/30},
					xmin=0,
					xmax=1.03,
					ymin=0,
					ymax=1.03,
					legend pos=south west,
					legend cell align={left}
				]
				\myaddplotpr{data/pr_curves/seen_unseen_clustered/data-siamese-roc-no-augment-clustered-unseen+seen.csv}{recallJoins}{precisionJoins}{}{\mycolorTNI}{x}
				\myaddplotpr{data/pr_curves/seen_unseen_clustered/data-siamese-roc-no-augment-clustered-unseen+seen.csv}{recallCuts}{precisionCuts}{}{\mycolorTNI}{+}
				\myaddplotpr{data/pr_curves/seen_unseen_clustered/data-siamese-roc-augment-clustered-unseen+seen.csv}{recallJoins}{precisionJoins}{}{\mycolorTNIa}{x}
				\myaddplotpr{data/pr_curves/seen_unseen_clustered/data-siamese-roc-augment-clustered-unseen+seen.csv}{recallCuts}{precisionCuts}{}{\mycolorTNIa}{+}
				\myaddplotpr{data/pr_curves/seen_unseen_clustered/data-siamese-roc-cumulative-hist-clustered-unseen+seen.csv}{recallJoins}{precisionJoins}{}{\mycolorTNH}{x}
				\myaddplotpr{data/pr_curves/seen_unseen_clustered/data-siamese-roc-cumulative-hist-clustered-unseen+seen.csv}{recallCuts}{precisionCuts}{}{\mycolorTNH}{+}
				\myaddplotpr{data/pr_curves/seen_unseen_clustered/data-geometry-roc-clustered-combined.csv}{recall_joins}{precision_joins}{}{\mycolorPQAP}{x}
				\myaddplotpr{data/pr_curves/seen_unseen_clustered/data-geometry-roc-clustered-combined.csv}{recall_cuts}{precision_cuts}{}{\mycolorPQAP}{+}
				\myaddplotpr{data/pr_curves/seen_unseen_clustered/data-histogram-roc-combined-clustered.csv}{recall_joins}{precision_joins}{}{\mycolorDH}{x}
				\myaddplotpr{data/pr_curves/seen_unseen_clustered/data-histogram-roc-combined-clustered.csv}{recall_cuts}{precision_cuts}{}{\mycolorDH}{+}
			\end{axis}
		\end{tikzpicture}
	\end{minipage}
	\caption{Depicted above are precision recall curves for the independent classification of pairs of organoid images (left) and the clustering of organoid images (right), on the data sets Test-100 (top), Test-30 (middle) and Test-100/30 (bottom).}
	\label{figure:pr-curves}
\end{figure}
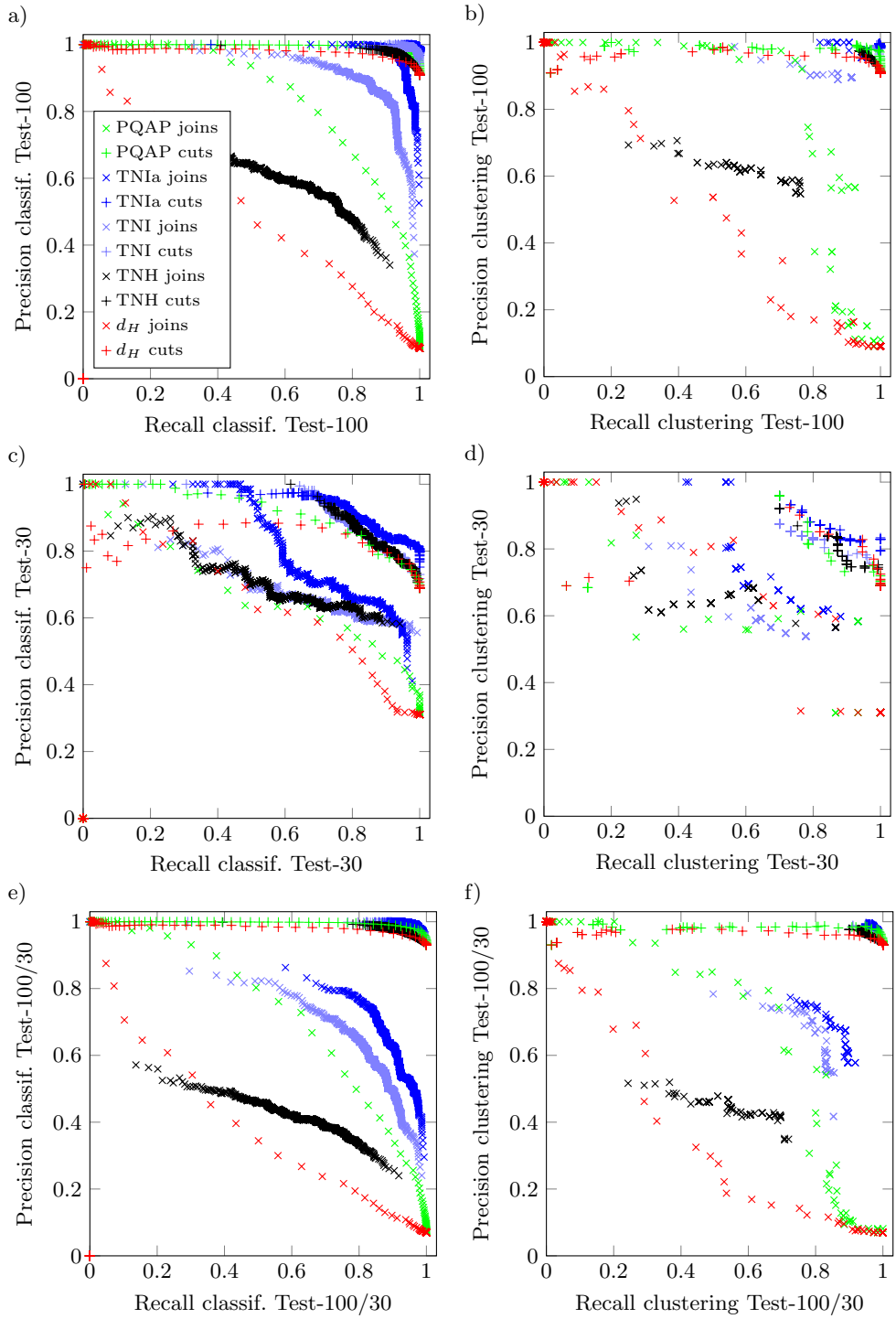

\subsection{Clustering organoid images}
\label{section:experiments-clustering}

\tikzmath{\gaaoffset=0.64;}
\tikzmath{\histooffset=0.81;}

\newcommand{\myaddplotvi}[6]{
	\addplot+[
		only marks,
		mark=#6,
		mark options={fill=#5, scale=0.4, solid},		
		#5
	] table[
		x expr=#2,
		y expr=#3, 
		col sep=comma
	] {#1};
	\ifthenelse { \equal {#4} {} }
		{}
		{\addlegendentry{\scriptsize #4};}
}

\begin{figure}
	\centering
	\small
	\begin{minipage}{0.31\linewidth}
		\centering
		\begin{tikzpicture}
			\pgfplotsset{
				width=1.2\linewidth,
				height=1.2\linewidth
			}
			\begin{axis}[
					title={Test-100},
					xlabel={$\chi$},
					ylabel={VI},
					xtick distance=1,
					xmin=-1.0,
					xmax=1.0,
					ymin=0,
					ymax=4.0,
					legend pos=north east,
					legend cell align={left}
				]
				\myaddplotvi{data/pr_curves/seen_clustered/data-geometry-roc-clustered.csv}
					{-(\thisrow{threshold}* 2.0 - 1.0 ) + (\gaaoffset * 2.0 - 1.0)}
					{\thisrow{VI}}
					{}{\mycolorPQAP}{*}
				\myaddplotvi{data/vi_curves/seen/data-siamese-roc-augment-clustering-test-2.csv}
					{\thisrow{logThreshold} / 13.8155}
					{\thisrow{VI}}
					{}{\mycolorTNIa}{*}
				\myaddplotvi{data/vi_curves/seen/data-siamese-roc-no-augment-clustering-test-2.csv}
					{\thisrow{logThreshold} / 13.8155}
					{\thisrow{VI}}
					{}{\mycolorTNI}{*}
				\myaddplotvi{data/vi_curves/seen/data-siamese-histograms-roc-clustering-test-2.csv}
					{\thisrow{logThreshold} / 13.8155}
					{\thisrow{VI}}
					{}{\mycolorTNH}{*}
				\myaddplotvi{data/pr_curves/seen_clustered/data-histogram-joins-cuts-roc-clustered.csv}
					{-(\thisrow{threshold}* 2.0 - 1.0) + (\histooffset * 2.0 - 1.0)}
					{\thisrow{VI}}
					{}{\mycolorDH}{*}
			\end{axis}
		\end{tikzpicture}
	\end{minipage}
	\hfill
	\begin{minipage}{0.31\linewidth}
		\centering
		\hspace{5ex}
		\begin{tikzpicture}
			\pgfplotsset{
				width=1.2\linewidth,
				height=1.2\linewidth
			}
			\begin{axis}[
					title={Test-30},
					xlabel={$\chi$},
					ylabel={\phantom{VI}},
					yticklabels=\empty,
					xtick distance=1,
					xmin=-1.0,
					xmax=1.0,
					ymin=0,
					ymax=4.0,
					legend pos=north east,
					legend cell align={left}
				]
				\myaddplotvi{data/pr_curves/unseen_clustered/data-geometry-roc-clustered-unseen.csv}
					{-(\thisrow{threshold} * 2.0 - 1.0) + (\gaaoffset * 2.0 - 1.0)}
					{\thisrow{VI}}
					{PQAP}{\mycolorPQAP}{*}
				\myaddplotvi{data/vi_curves/unseen/data-siamese-roc-augment-clustering-unseen-2.csv}
					{\thisrow{logThreshold} / 13.8155}
					{\thisrow{VI}}
					{TNIa}{\mycolorTNIa}{*}		
				\myaddplotvi{data/vi_curves/unseen/data-siamese-roc-no-augment-clustering-unseen-2.csv}
					{\thisrow{logThreshold} / 13.8155}
					{\thisrow{VI}}
					{TNI}{\mycolorTNI}{*}
				\myaddplotvi{data/vi_curves/unseen/data-siamese-roc-histograms-clustering-unseen-2.csv}
					{\thisrow{logThreshold} / 13.8155}
					{\thisrow{VI}}
					{TNH}{\mycolorTNH}{*}
				\myaddplotvi{data/pr_curves/unseen_clustered/data-histogram-joins-cuts-roc-unseen-clustered.csv}
					{-(\thisrow{threshold} * 2.0 - 1.0) + (\histooffset * 2.0 - 1.0)}
					{\thisrow{VI}}
					{$d_{H}$}{\mycolorDH}{*}
			\end{axis}
		\end{tikzpicture}
	\end{minipage}
	\hfill
	\begin{minipage}{0.31\linewidth}
		\centering
		\begin{tikzpicture}
			\pgfplotsset{
				width=1.2\linewidth,
				height=1.2\linewidth
			}
			\begin{axis}[
					title={Test-100/30},
					xlabel={$\chi$},
					ylabel={\phantom{VI}},
					yticklabels=\empty,
					xtick distance=1,
					xmin=-1.0,
					xmax=1.0,
					ymin=0,
					ymax=4.0,
					legend pos=outer north east,
					legend style={
						nodes={
							scale=0.55
						}
					},
					legend cell align={left}
				]
				\myaddplotvi{data/pr_curves/seen_unseen_clustered/data-geometry-roc-clustered-combined.csv}
					{-(\thisrow{threshold} * 2.0 - 1.0) + (\gaaoffset * 2.0 - 1.0)}
					{\thisrow{VI}}
					{}{\mycolorPQAP}{*}
				\myaddplotvi{data/vi_curves/seen_unseen/data-siamese-roc-augment-clustering-unseen+test-2.csv}
					{\thisrow{logThreshold} / 13.8155}
					{\thisrow{VI}}
					{}{\mycolorTNIa}{}
				\myaddplotvi{data/vi_curves/seen_unseen/data-siamese-roc-no-augment-clustering-unseen+test-2.csv}
					{\thisrow{logThreshold} / 13.8155}
					{\thisrow{VI}}
					{}{\mycolorTNI}{*}
				\myaddplotvi{data/vi_curves/seen_unseen/data-siamese-roc-histograms-clustering-unseen+test-2.csv}
					{\thisrow{logThreshold} / 13.8155}
					{\thisrow{VI}}
					{}{\mycolorTNH}{*}
				\myaddplotvi{data/pr_curves/seen_unseen_clustered/data-histogram-roc-combined-clustered.csv}
					{-(\thisrow{threshold} * 2.0 - 1.0) + (\histooffset * 2.0 - 1.0)}
					{\thisrow{VI}}
					{}{\mycolorDH}{*}
			\end{axis}
		\end{tikzpicture}
	\end{minipage}
	\caption{Depicted above is the variation of information distance between computed and true clusterings of organoid images as a function of a constant $\chi$ added to all cost coefficients of the correlation clustering problem.
	For this comparison, the costs from both the PQAP and the twin networks are scaled globally (not per instance) to $[-1,1]$, which does not alter the solutions.}
	\label{figure:vi-curves}
\end{figure}
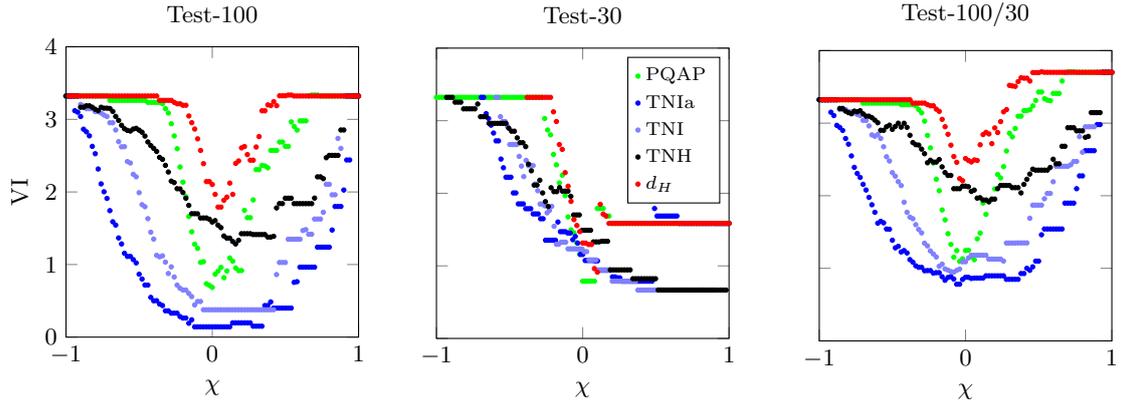
\begin{figure}
\centering
\includegraphics[width=\textwidth]{figures/figures_introduction/organoids_10_3_shuffled_bright.png}\\[0.15ex]
\includegraphics[width=\textwidth]{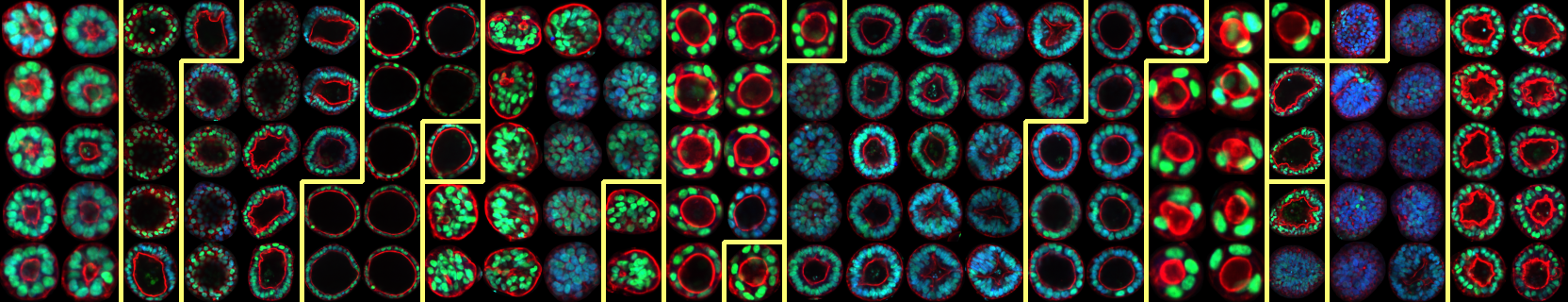}\\[0.15ex]
\includegraphics[width=\textwidth]{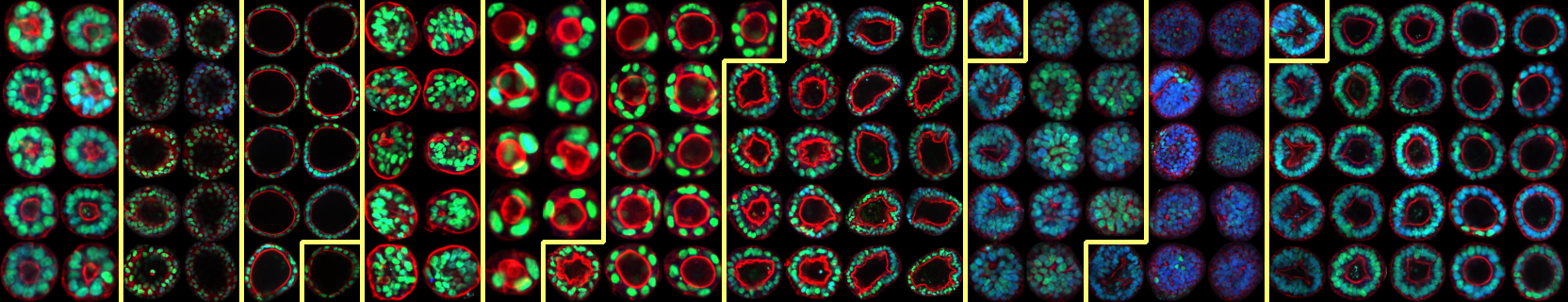}\\[0.15ex]
\includegraphics[width=\textwidth]{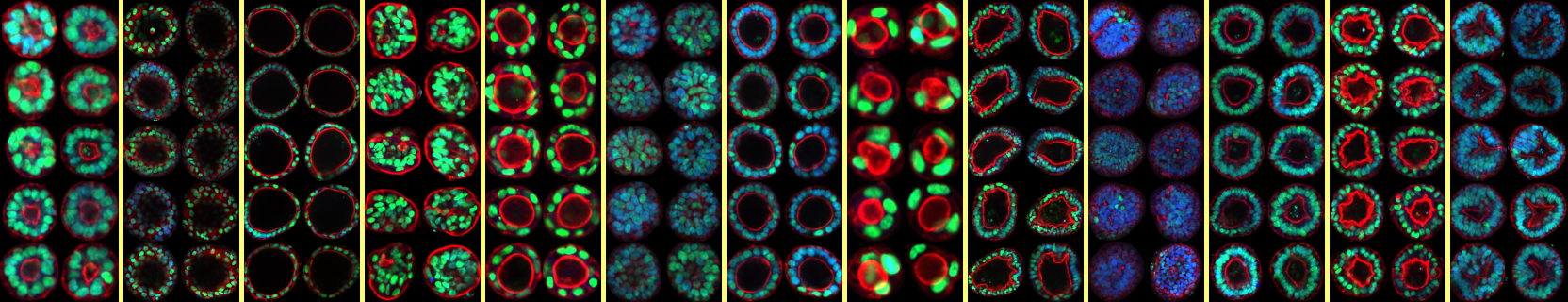}
\caption{Depicted above are,
in Row~1, all organoid images of the sets Test-100 and Test-30 (scaled differently to the same size for illustration),
in Row~2, the clustering of the images wrt.~costs defined by the partial quadratic assignment problem,
in Row~3, the clustering of the images wrt.~costs defined by a twin network, and
in Row~4, the clustering by biologists of Test-100 (Clusters 1--10) and Test-30 (Clusters 11--13).}
\label{figure:clustering_results}
\end{figure}

Toward the clustering of organoid images, we solve the instances of the correlation clustering problem defined in \Cref{section:models-clustering} by the algorithm described in \Cref{section:algorithms-inference}.
We report in \Cref{table:independent-decisions}, \Cref{figure:pr-curves} (right) and \Cref{figure:clustering_results} how close these clusterings are to clusterings by biologists, on images not seen during learning, of organoid classes seen during learning (Test-100), organoid classes not seen during learning (Test-30) and a combination of both (Test-100/30).
We report in \Cref{figure:vi-curves} how this distance between computed and true clusterings is affected by a constant added to all costs of the correlation clustering problem.
We discuss computation times in \Cref{section:runtimes}.

Closest to the truth for Test-100 by a variation of information \cite{arabie-1973,meila-2007} of $0.14$ is the twin network learned with data augmentation. 
For these images, it is consistently better than the other models with respect to all reported metrics.
Closest to the truth for Test-30 by a variation of information of $0.79$ is the partial quadratic assignment problem. 
It generalizes better from the classes of organoids seen during learning (Train-100 and Test-100) to the classes of images not seen during learning and contained in Test-30.
Comparing the accuracy of clustering and classification by comparing Rand's index (RI) \cite{rand-1971} with accuracy (ACC) in \Cref{table:independent-decisions}, we observe:
For Test-100, consolidating conflicting independent classifications of pairs of images by solving the correlation clustering problem increases the accuracy consistently, for all models. 
For Test-30, it increases the accuracy for the partial quadratic assignment problem and the Hellinger Distance while decreasing it for the twin networks.
It can be seen from \Cref{figure:vi-curves} that adding a constant to all costs of the correlation clustering problem determined by partial quadratic assignment does not improve solutions on any of the test sets.
In contrast, adding a positive constant to the costs determined by the twin network improves solutions on Test-30.
\section{Conclusion}
For the light microscopy images of organoids we have considered in this study, it can be seen from the experiments that both the partial quadratic assignment problem and the twin network can be applied, together with correlation clustering, to partition an image collection in a way that is similar to a partition of the same collection by biologists, by a Rand Index of $97.2\%$ and $99.4\%$, respectively, for classes of organoids seen during training, and by a Rand Index of $77.2\%$ and $80.2\%$, respectively, for classes of organoids not seen during training.
This enables unbiased quantification of heterogeneity and identification of morphological parameters of emerging patterns in biological systems.
This result cannot be considered a fully automatic solution to the problem but can be considered an informative proposal for computer-assisted clustering of organoid images. 
Quantifying the amount of work it can save compared to a fully manual clustering of organoid images is one direction of future research that connects with the field of human computer interaction.
Our study is limited to 2-dimensional light microscopy images of organoids. 
Generalizations to 3-dimensional images, also electron microscopy images, and possibly even sequences of such images can be considered in principle but are not discussed here.
Toward computer vision methods with potential for applications beyond organoid images, we have introduced a normalization that makes solutions to the partial quadratic assignment problem comparable across instances.
We contribute the organoid images with annotations as well as the instances from this application of the \textsc{np}-hard partial quadratic assignment problem and correlation clustering problem some of which are challenging even for the most sophisticated algorithms.
The source code of all algorithms and for reproducing the experiments is provided as supplementary material.
Ethical considerations are discussed in \Cref{section:ethics}.

\paragraph{Acknowledgements}
Organoids were cultured in Organoid and Stem Cell Facility, and experiments in 384-well plates and image acquisitions were performed with the support of the in-house screening facility (Technology Development Studio) of the Max Planck Institute of Molecular Cell Biology and Genetics.
B.A.~is supported by the Center of Scalable Data Analytics and AI, Dresden/Leipzig.
A.G.B.~and Y.H.K.~are supported by the Max Planck Society. 
R.K.~is supported by the Deutsche Forschungsgemeinschaft (DFG, German Research Foundation), Project 288034826-IRTG 2251 (to A.G.B.).

\appendix
\section{Appendix}

\subsection{Segmentation of organoids}
\label{appendix:organoid-segmentation}

\begin{figure}[t]
\centering
\small
\begin{tabular}{
c @{\,}
c @{\,}
c @{\,}
c @{\,}
c @{\,}
c @{\,}
c @{\,}
}
\includegraphics[width=0.16\textwidth]{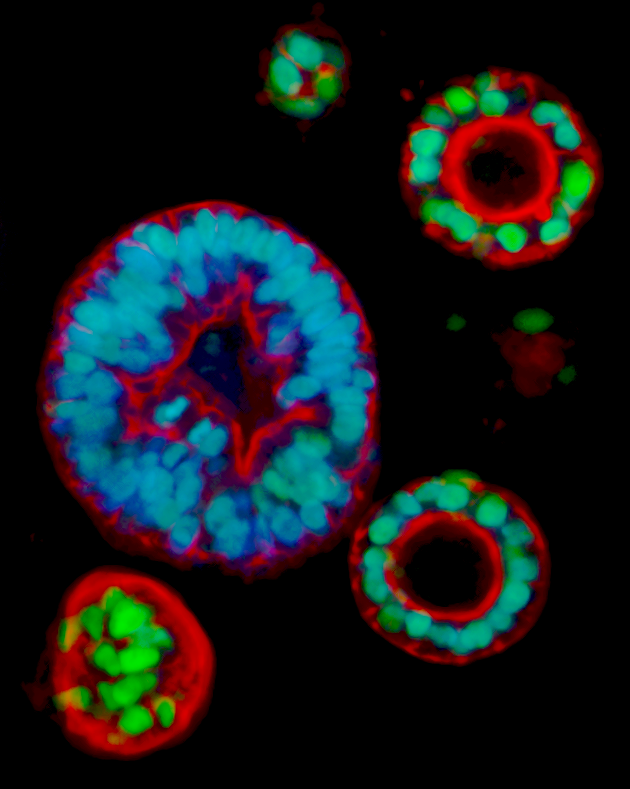} &
\includegraphics[width=0.16\textwidth]{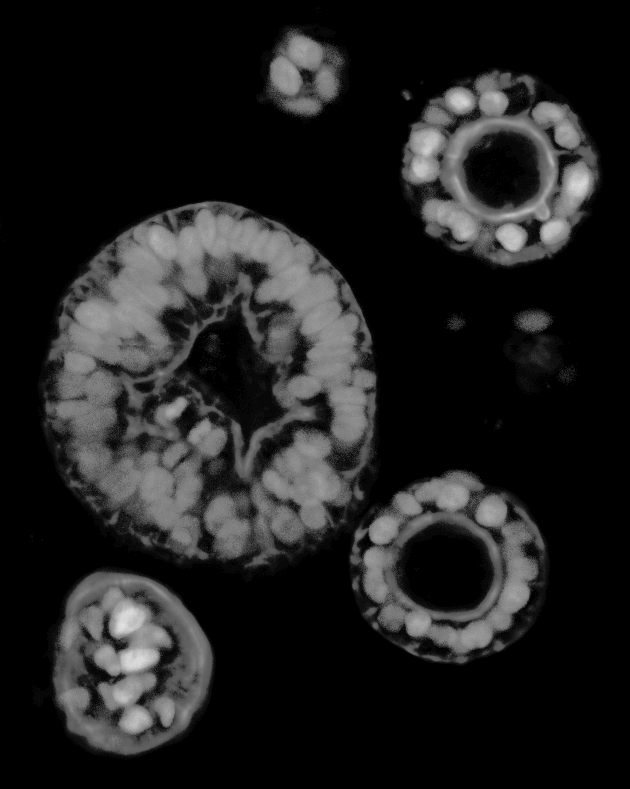} &
\includegraphics[width=0.16\textwidth]{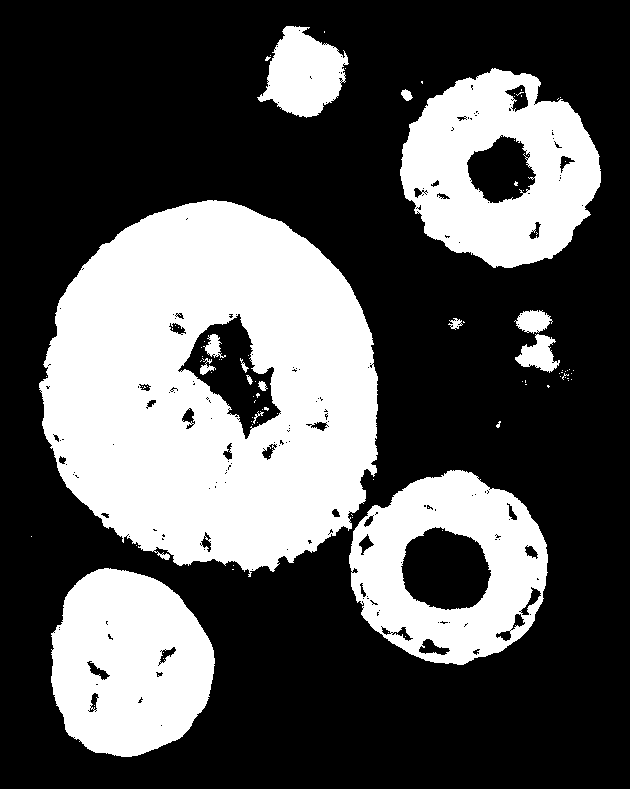} &
\includegraphics[width=0.16\textwidth]{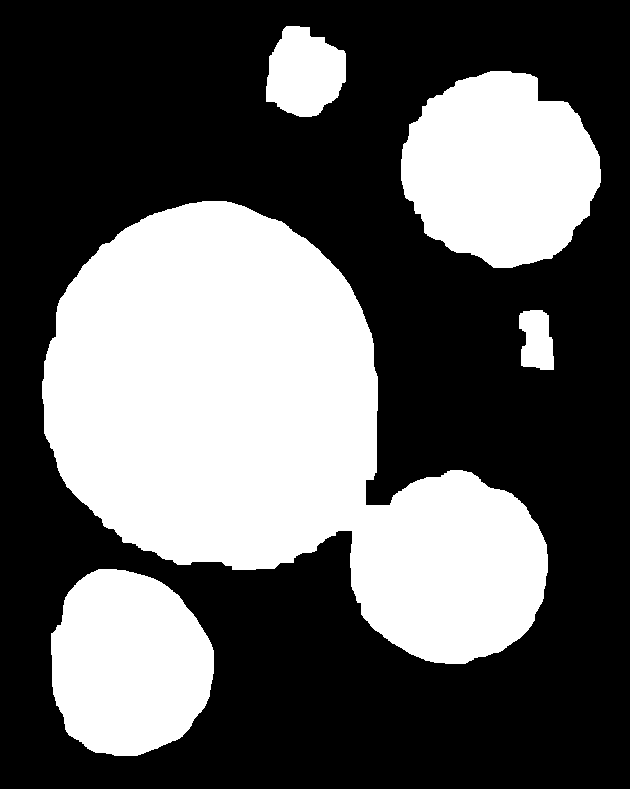} &
\includegraphics[width=0.16\textwidth]{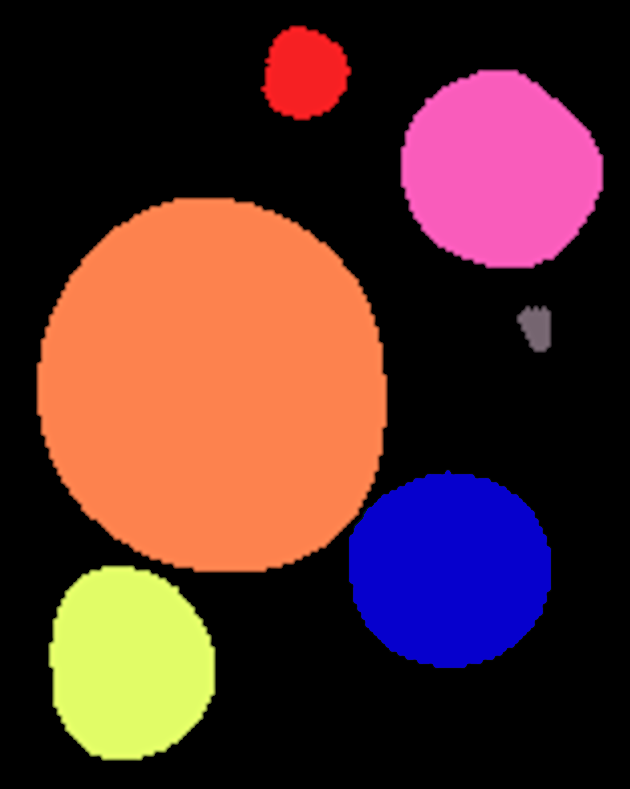} &
\includegraphics[width=0.16\textwidth]{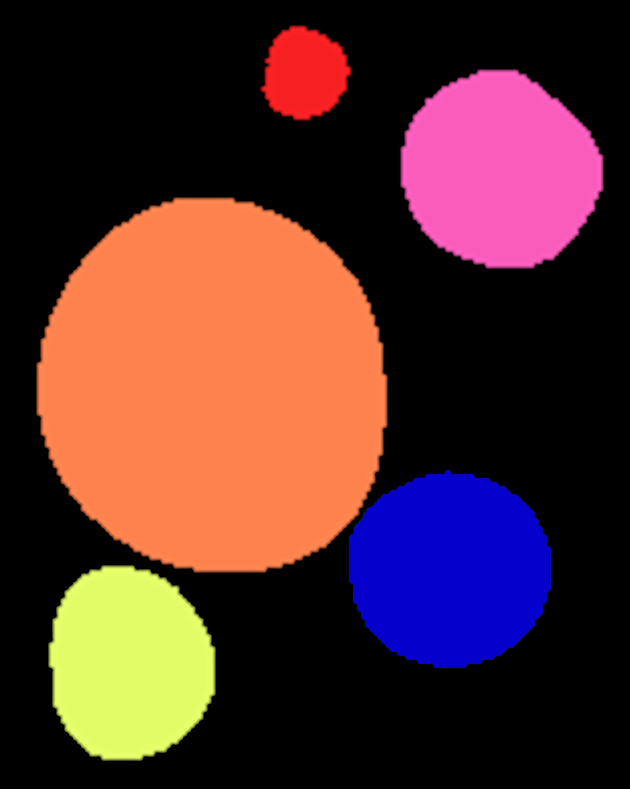}
\\
Raw &
Gray-scale &
Binary  &
Morphed & 
StarDist &
Threshold
\end{tabular}
\caption{Image pre-processing and organoid segmentation}
\label{appendix:fig_seg_pipeline}
\end{figure}

In order to extract images of individual organoids from microscopy images of numerous organoids within a field of view, we apply the StarDist method \cite{schmidt-2018}. 
Before applying StarDist, we process the images as depicted in \Cref{appendix:fig_seg_pipeline} and described below.
For further details, we refer to the source code.
Firstly, we convert the raw images with three color channels to gray-scale images. 
Secondly, we binarize the gray-scale images by applying a threshold. 
Thirdly, in order to remove noise while preserving the shapes and boundaries of the organoids, we apply a sequence of morphological operations.
The morphed images are then put into the StarDist method that outputs segments.
We suppress small segments.

\subsection{Extraction of key points}
\label{appendix:assignment-key-points}

For every organoid image $j$, we construct a set $V_{j}$ of key points as described below.
These key points are supposed to describe the organoid.

The green and the blue color channel show nuclei of cells within the organoid. 
We construct key points for these channels independently, as the brightness of nuclei in these channels is different.
To each channel, we apply the StarDist method \cite{schmidt-2018} that outputs segments. 
For each segment, we introduce one key point $v \in V_j$ at the barycenter $r_v^j$ of the segment and associate with this key point the average color $a_v^j$ across the segment.

The red color channel shows membranes within the organoid. 
For this channel, we construct key points in a greedy manner, prioritizing pixels based on the brightness, the brightness relative to all pixels, the brightness relative to all pixels in a local neighborhood, and the distance to key points introduced in previous iterations.
For further details, we refer to the source code.
We associate with each of these key points $v$ its location $r_v^j$ and color $a_v^j$.

\subsection{Estimation of organoid barycenter and extent}
\label{appendix:assignment-barycenter-extent}

We estimate the barycenter $r_0^j$ of the organoid in an image $j$ from the segment $S_j$ output by the StarDist method \cite{schmidt-2018} for this organoid, i.e.~from a set $S_j \in \mathbb{N}_0^j$ of pixels, as $r_0^j := |S|^{-1} \sum_{s \in S} s$.

We estimate the extent $\sigma_{0}^{j}$ of the organoid in an image $j$ from the segment $S_j$ and the set $V_j$ of key points (cf.~\Cref{appendix:assignment-key-points}) as
\begin{align}
\sigma_{0}^{j} = 
\max_{v \in V_{j}} \;
\| r_v^j - r_0^j \|_2 \;
\sup \;
\left\{ 
	\lambda \in \mathbb{R}	 
	\;\middle|\; 
	\lfloor
	r_0^j + \lambda \, ( r_v^j - r_0^j ) 	
	\rfloor
	\in S_j 
\right\}
\enspace .
\end{align}

\subsection{Motivation of assignment costs}
\label{section:geometric-consistency}

\begin{definition}
\label{definition:consistency}
We call a feasible assignment $x \in X_{V_j V_k}$ \emph{geometrically consistent} if there exist $r^* \in \mathbb{R}^2$, $s \in \mathbb{R}^+$ and $\gamma \in [0, 2\pi)$ for which the map $f \colon \mathbb{R}^2 \to \mathbb{R}^2$ such that for all $r \in \mathbb{R}^2$
\begin{align}
f(r) = r^* + s R_\gamma (r - r_0^j)
\quad \text{with} \quad
R_\gamma = \begin{pmatrix}\cos\gamma & -\sin\gamma\\ \sin\gamma & \cos\gamma\end{pmatrix}
\end{align}
has the following properties:
\begin{align}
f(r_0^j) & = r_0^k 
\label{eq:consistency-1}
\\
\forall r \in \mathbb{R}^2 \colon \quad
	| f(r) - r_0^k | & = | r - r_0^j | \, \sigma_0^k/\sigma_0^j
\label{eq:consistency-2}
\\
\forall (v,w) \in x^{-1}(1) \colon \hspace{9.6ex}
	f(r_v^j) & = r_w^k
\enspace .
\end{align}
\end{definition}

\begin{lemma}
For any geometrically consistent $x \in X_{V_j V_k}$ and any $r^*, s, \gamma$ according to \Cref{definition:consistency}, we have $r^* = r_0^k$ and $s = \sigma_0^k / \sigma_0^j$.
\end{lemma}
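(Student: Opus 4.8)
The plan is to read off both identities directly from the two norm/point constraints in \Cref{definition:consistency}, using only that the rotation matrix $R_\gamma$ is orthogonal and that $s$, $\sigma_0^j$, $\sigma_0^k$ are strictly positive.

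First I would evaluate the affine map at the point $r_0^j$ itself. Since $f(r) = r^* + s R_\gamma (r - r_0^j)$ for all $r$, substituting $r = r_0^j$ gives $f(r_0^j) = r^* + s R_\gamma \, 0 = r^*$. Comparing this with \eqref{eq:consistency-1}, i.e.\ $f(r_0^j) = r_0^k$, immediately yields $r^* = r_0^k$.

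Next I would substitute $r^* = r_0^k$ back into $f$ and into \eqref{eq:consistency-2}. For every $r \in \mathbb{R}^2$ we then have $f(r) - r_0^k = s R_\gamma (r - r_0^j)$, and since $R_\gamma$ is a rotation (hence an orthogonal, norm-preserving linear map) and $s > 0$, this gives $|f(r) - r_0^k| = s\, |r - r_0^j|$. Now \eqref{eq:consistency-2} reads $s\, |r - r_0^j| = |r - r_0^j|\, \sigma_0^k/\sigma_0^j$ for all $r$. Choosing any $r \neq r_0^j$ (such $r$ exists in the plane) and dividing by the positive number $|r - r_0^j|$ yields $s = \sigma_0^k / \sigma_0^j$, as claimed.

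There is essentially no obstacle here: the lemma is an immediate consequence of the definition together with the isometry property of rotations. The only points requiring a moment's care are that $\sigma_0^j$ (and $\sigma_0^k$) are nonzero, which holds because they lie in $\mathbb{R}^+$ by the definitions in \Cref{appendix:assignment-barycenter-extent}, and that one may pick a point $r$ with $r \neq r_0^j$, which is trivial in $\mathbb{R}^2$.
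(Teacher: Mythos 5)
Your proposal is correct and follows essentially the same route as the paper's own proof: evaluate $f$ at $r_0^j$ to extract $r^* = r_0^k$ from \eqref{eq:consistency-1}, then use $s>0$ and the norm-preservation of $R_\gamma$ to reduce \eqref{eq:consistency-2} to $s\,|r - r_0^j| = |r - r_0^j|\,\sigma_0^k/\sigma_0^j$ and cancel. Your version is slightly more explicit (noting the orthogonality of $R_\gamma$ and the need to pick $r \neq r_0^j$), but there is no substantive difference.
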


\begin{proof}
By \eqref{eq:consistency-1}, $r^* = r_0^k$.
Together with $s > 0$ follows
$|f(r) - r_0^k| = |r^* +  s R_\gamma (r - r_0^j) - r_0^k| = s |r - r_0^j|$.
Together with \eqref{eq:consistency-2} follows $s = \sigma_0^k / \sigma_0^j$.
\qed
\end{proof}

\begin{lemma}[Motivation of $d'_{vw}$ and $d''_{vwv'w'}$]
\label{lemma:consistency}
Let $x \in X_{V_j V_k}$ be geometrically consistent.
1.~For any pair $(v,w) \in V_j \times V_k$ with $x_{vw} = 1$, we have $d'_{vw} = 0$.
2.~For any distinct $v,v' \in V_j$ and any distinct $w,w' \in V_k$ such that $x_{vw} = x_{v'w'} = 1$, we have $d''_{vwv'w'} = 0$.
\end{lemma}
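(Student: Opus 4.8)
The strategy is to unpack the definitions of $d'_{vw}$ and $d''_{vwv'w'}$ from Section \ref{section:models-assignment} and to substitute in the geometric-consistency conditions from Definition \ref{definition:consistency}, using the preceding lemma which pins down $r^* = r_0^k$ and $s = \sigma_0^k/\sigma_0^j$. For part~1, recall that $d'_{vw} = |\sigma_v^j - \sigma_w^k|$ where $\sigma_v^j = |r_v^j - r_0^j|/\sigma_0^j$ and $\sigma_w^k = |r_w^k - r_0^k|/\sigma_0^k$. Since $x_{vw} = 1$, property \eqref{eq:consistency-2} applied at $r = r_v^j$ together with $f(r_v^j) = r_w^k$ gives $|r_w^k - r_0^k| = |r_v^j - r_0^j|\,\sigma_0^k/\sigma_0^j$. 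Dividing both sides by $\sigma_0^k$ yields $\sigma_w^k = |r_v^j - r_0^j|/\sigma_0^j = \sigma_v^j$, hence $d'_{vw} = 0$.

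For part~2, recall $d''_{vwv'w'} = |\alpha_{vv'}^j - \alpha_{ww'}^k|$ with $\alpha_{vv'}^j = \angle(r_v^j - r_0^j,\, r_{v'}^j - r_0^j)$ and similarly for $k$. The key observation is that the map $f$ acts on the displacement vectors $r - r_0^j$ by $r \mapsto f(r)$, so $f(r_v^j) - r_0^k = s R_\gamma (r_v^j - r_0^j)$, using $f(r_0^j) = r_0^k$ from \eqref{eq:consistency-1} (or equivalently $r^* = r_0^k$). Thus, for $x_{vw} = x_{v'w'} = 1$ we have $r_w^k - r_0^k = sR_\gamma(r_v^j - r_0^j)$ and $r_{w'}^k - r_0^k = sR_\gamma(r_{v'}^j - r_0^j)$. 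Since $s > 0$ is a positive scalar and $R_\gamma$ is a rotation, both operations preserve the unsigned angle between two vectors; hence $\alpha_{ww'}^k = \angle(sR_\gamma u,\, sR_\gamma u') = \angle(u, u') = \alpha_{vv'}^j$ where $u = r_v^j - r_0^j$, $u' = r_{v'}^j - r_0^j$, and therefore $d''_{vwv'w'} = 0$.

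The only mild subtlety — and the step I would state carefully rather than wave at — is the claim that scaling by $s>0$ and rotating by $R_\gamma$ leave the angle $\angle(\cdot,\cdot)$ invariant. This follows from the cosine formula $\cos\angle(a,b) = \langle a,b\rangle/(|a||b|)$ together with $\langle sR_\gamma a, sR_\gamma b\rangle = s^2\langle a, b\rangle$ (orthogonality of $R_\gamma$) and $|sR_\gamma a| = s|a|$; the factors $s^2$ and $s$ cancel, so the cosine, and hence the angle in $[0,\pi]$, is unchanged. One should note the vectors $u, u'$ are nonzero whenever $v, v'$ are genuine key points distinct from the barycenter, which is implicit in the setup; if degenerate cases where a key point coincides with $r_0^j$ are possible, the angle is undefined there and the statement is understood to range over non-degenerate pairs. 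No genuine obstacle arises; the lemma is essentially a direct consequence of the fact that $f$ restricted to displacement vectors is a positive scalar multiple of an orthogonal map, composed with the definitions.
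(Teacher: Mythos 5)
Your proof is correct and follows essentially the same route as the paper's: part 1 chains $\sigma_w^k = |f(r_v^j)-r_0^k|/\sigma_0^k = |r_v^j-r_0^j|/\sigma_0^j = \sigma_v^j$ via \eqref{eq:consistency-2}, and part 2 reduces the angle claim to invariance of $\angle(\cdot,\cdot)$ under $sR_\gamma$ with $s>0$. You are in fact slightly more careful than the paper, which asserts the angle invariance without the cosine-formula justification and does not mention the degenerate case of a key point coinciding with the barycenter.
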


\begin{proof}
Firstly, $\sigma_w^k 
= |r_w^k - r_0^k| / \sigma_0^k 
= |f(r_v^j) - r_0^k| / \sigma_0^k
= |r_v^j - r_0^j| / \sigma_0^j 
= \sigma_w^j$.
Thus, $d'_{vw} = |\sigma_v^j = \sigma_w^k| = 0$.
Secondly, $\alpha^k_{v'w'} = 
\angle(r^k_{v'} - r^k_0, r^k_{w'} - r^k_0) 
= \angle(f(r^j_v) - r^k_0, f(r^j_w) - r^k_0)
= \angle(s R_\gamma (r^j_v - r^j_0), s R_\gamma (r^j_w - r^j_0))
= \angle(r^j_v - r^j_0, r^j_w - r^j_0)
= \alpha^j_{vw}$.
Thus, $d'_{vwv'w'} = |\alpha^j_{vw} - \alpha^k_{v'w'}| = 0$.
\qed
\end{proof}

\begin{lemma}[Bound on cost]
\label{lemma:bound}
For any instance of the partial quadratic assignment problem as defined above and any feasible solution $x$, we have
$- (1 - \lambda) (\theta \delta + (1-\theta) \delta') - \lambda \delta'' \leq \varphi_{c^{jk}}(x)$.
For any solution $x^*$, also $\varphi_{c^{jk}}(x^*) \leq 0$.
\end{lemma}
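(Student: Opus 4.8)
The plan is to establish the two inequalities separately: a lower bound $-(1-\lambda)(\theta\delta + (1-\theta)\delta') - \lambda\delta'' \le \varphi_{c^{jk}}(x)$ valid for \emph{every} feasible $x \in X_{V_j V_k}$, and the upper bound $\varphi_{c^{jk}}(x^*) \le 0$ valid for every \emph{optimal} solution $x^*$ of \eqref{eq:assignment}.

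For the lower bound, I would first note that all distances entering the cost coefficients are non-negative, being a Euclidean norm, an absolute value of a difference, and an absolute value of a difference of angles, respectively: $d_{vw} \ge 0$, $d'_{vw} \ge 0$, $d''_{vwv'w'} \ge 0$. Hence $c^{jk}_{vw} = \theta(d_{vw} - \delta) + (1-\theta)(d'_{vw} - \delta') \ge -(\theta\delta + (1-\theta)\delta')$ and $c^{jk}_{vwv'w'} = d''_{vwv'w'} - \delta'' \ge -\delta''$. I would then use the constraints defining $X_{V_j V_k}$: writing $m := \sum_{v \in V_j}\sum_{w \in V_k} x_{vw}$ for the number of assigned point pairs, these constraints force $x$ to encode a partial matching, so $m \le \min\{|V_j|, |V_k|\} = n^{jk}_1$, and any two distinct assigned pairs $(v,w) \ne (v',w')$ automatically satisfy $v \ne v'$ and $w \ne w'$. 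Plugging the coefficient bounds into \eqref{eq:assignment-objective}, the linear term is bounded below by $\frac{1-\lambda}{n^{jk}_1}\, m\, \bigl(-(\theta\delta + (1-\theta)\delta')\bigr) \ge -(1-\lambda)(\theta\delta + (1-\theta)\delta')$ because $m / n^{jk}_1 \le 1$; and the quadratic term, which by the note following \eqref{eq:assignment-objective} charges $c^{jk}_{vwv'w'} \ge -\delta''$ once for each of the $\binom{m}{2} \le \binom{n^{jk}_1}{2} = n^{jk}_2$ pairs of distinct assigned point pairs, is bounded below by $-\lambda\delta''$. Adding the two bounds gives the first claim.

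For the upper bound, the key observation is that the all-zeros vector $\mathbf 0$ lies in $X_{V_j V_k}$, since every inequality $\sum_w x_{vw} \le 1$ and $\sum_v x_{vw} \le 1$ holds trivially at $x = \mathbf 0$, and that $\varphi_{c^{jk}}(\mathbf 0) = 0$ because every summand of \eqref{eq:assignment-objective} contains at least one factor $x_{vw}$. Therefore the optimal value of \eqref{eq:assignment} is at most $0$, so $\varphi_{c^{jk}}(x^*) \le 0$ for any optimal $x^*$.

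The only delicate point is the combinatorial bookkeeping in the quadratic term: one has to verify that for a partial matching the quadratic sum effectively ranges over pairs of distinct assigned point pairs, count these as $\binom{m}{2}$, and reconcile the ordered double sum in \eqref{eq:assignment-objective} with the normalization $n^{jk}_2 = \tbinom{n^{jk}_1}{2}$ so that the resulting constant comes out as exactly $\lambda\delta''$ rather than a multiple of it; once the bound $m \le n^{jk}_1$ is in place this is immediate, and the remainder is elementary sign tracking.
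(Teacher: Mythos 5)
Your proof is correct and follows essentially the same route as the paper's: bound each cost coefficient from below by its value at zero distance, count the active terms against the normalizing constants $n^{jk}_1$ and $n^{jk}_2$, and obtain the upper bound from the feasibility of the all-zeros assignment (the paper phrases the lower bound as a comparison with an extremal instance in which all distances vanish and $n^{jk}_1$ pairs are assigned, but the computation is identical). The one ``delicate point'' you flag --- reconciling the ordered double sum in \eqref{eq:assignment-objective} with $n^{jk}_2 = \tbinom{n^{jk}_1}{2}$ --- is real and not quite ``immediate'': taken literally, the ordered sum has $m(m-1) = 2\tbinom{m}{2}$ nonzero terms for a matching with $m$ assigned pairs, which would yield $-2\lambda\delta''$ rather than $-\lambda\delta''$; the paper's own proof silently adopts the unordered-pair reading, so you are no worse off, but the constant only comes out as stated under that interpretation of the quadratic sum.
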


\begin{proof}
Obviously, $\varphi_{\bar c^{jk}}(\bar x) \leq \varphi_{c^{jk}}(x)$ if three conditions hold:
Firstly, $\bar x$ is such that $\bar x_{vw} = 1$ for $n^{jk}_1$ many point pairs $(v,w) \in V_j \times V_k$.
Secondly, $\bar c^{jk}$ is such that for all $(v,w) \in V_j \in V_k$ with $x_{vw} = 1$, $\bar c^{jk}_{vw}$ is maximally negative.
This happens if and only if $d_{vw} = d'_{vw} = 0$, i.e.~if $\bar c^{jk}_{vw} = - \theta \delta - (1-\theta) \delta'$.
Thirdly, $\bar c^{jk}$ is such that for all distinct $v,v' \in V_j$ and all distinct $w,w' \in V_k$, $\bar c^{jk}_{vwv'w'}$ is maximally negative.
This happens if and only if $d'_{vwv'w'} = 0$, i.e.~if $\bar c^{jk}_{vwv'w'} = - \delta''$.
In this case, $\varphi_{\bar c^{jk}}(\bar x) = - (1 - \lambda) (\theta \delta + (1-\theta) \delta') - \lambda \delta''$, by definition of $n^{jk}_1$ and $n^{jk}_2$.

The feasible solution $0 \in X_{V_j V_k}$ is such that $\varphi_{c^{jk}}(0) = 0$.
This implies for any solution $x^*$ that $\varphi_{c^{jk}}(x^*) \leq 0$.
\qed
\end{proof}

\begin{figure}
\centering
\includegraphics[width=0.9\textwidth]{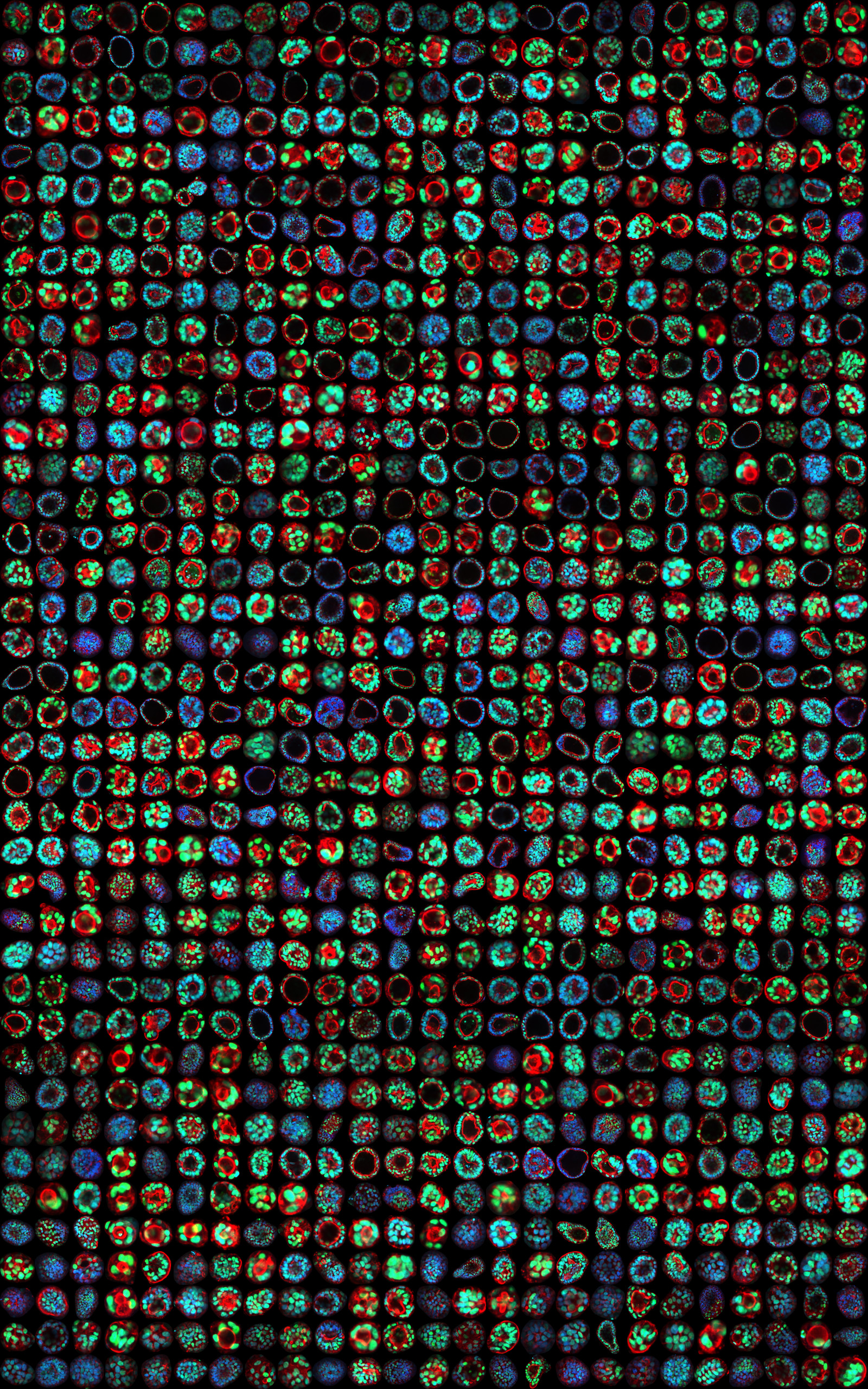}
\caption{Depicted above is the set \textit{Unlabeled-1000} of 1000 organoid images.}
\label{appendix:clustering_1000_unordered_figure}
\end{figure}

\begin{figure}
\centering
\includegraphics[width=0.9\textwidth]{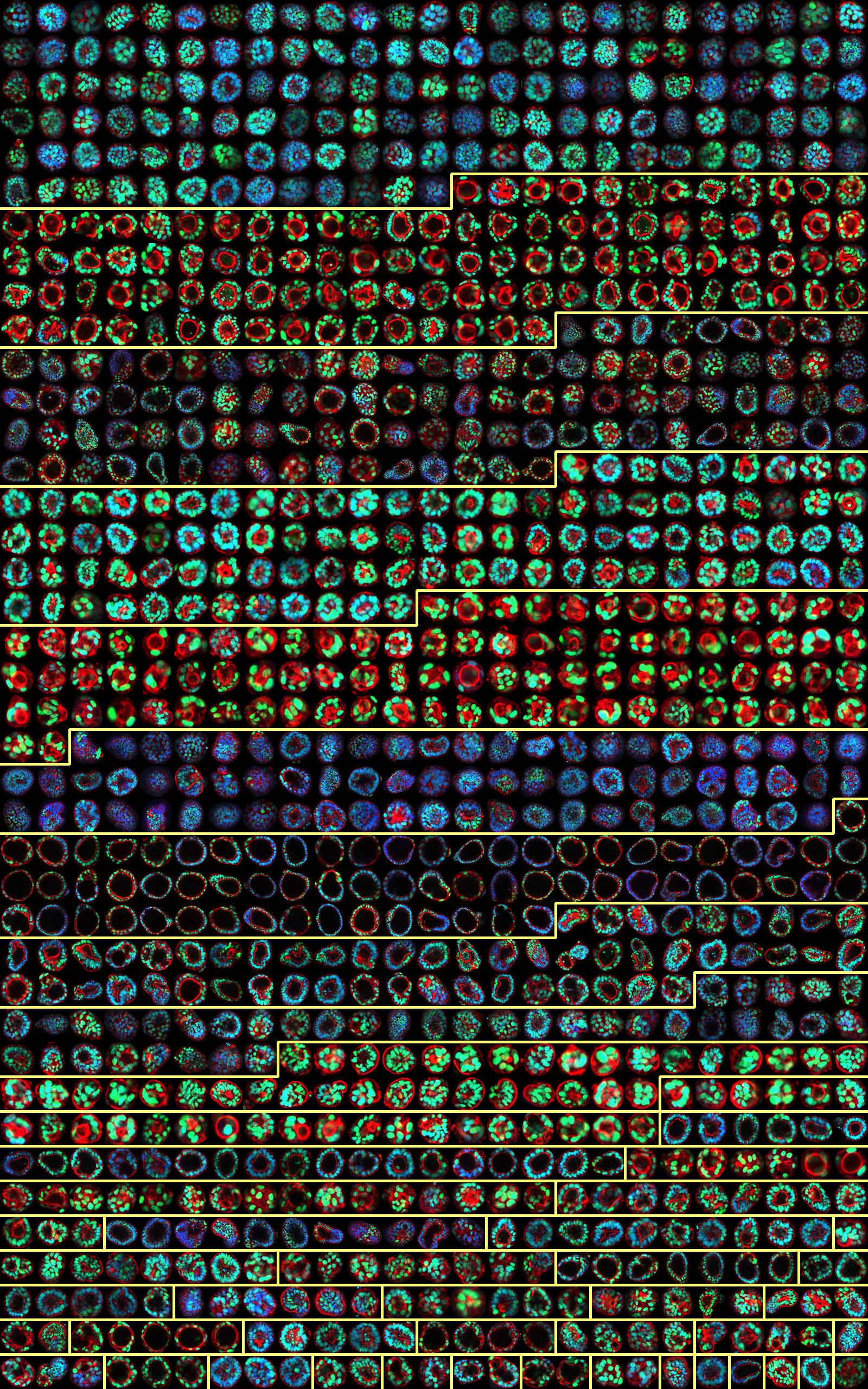}
\caption{Depicted above is the correlation clustering of the set \textit{Unlabeled-1000} with respect to costs defined by the twin network learned with data augmentation.}
\label{appendix:clustering_1000_NN_figure}
\end{figure}

\begin{figure}
\centering
\includegraphics[width=0.9\textwidth]{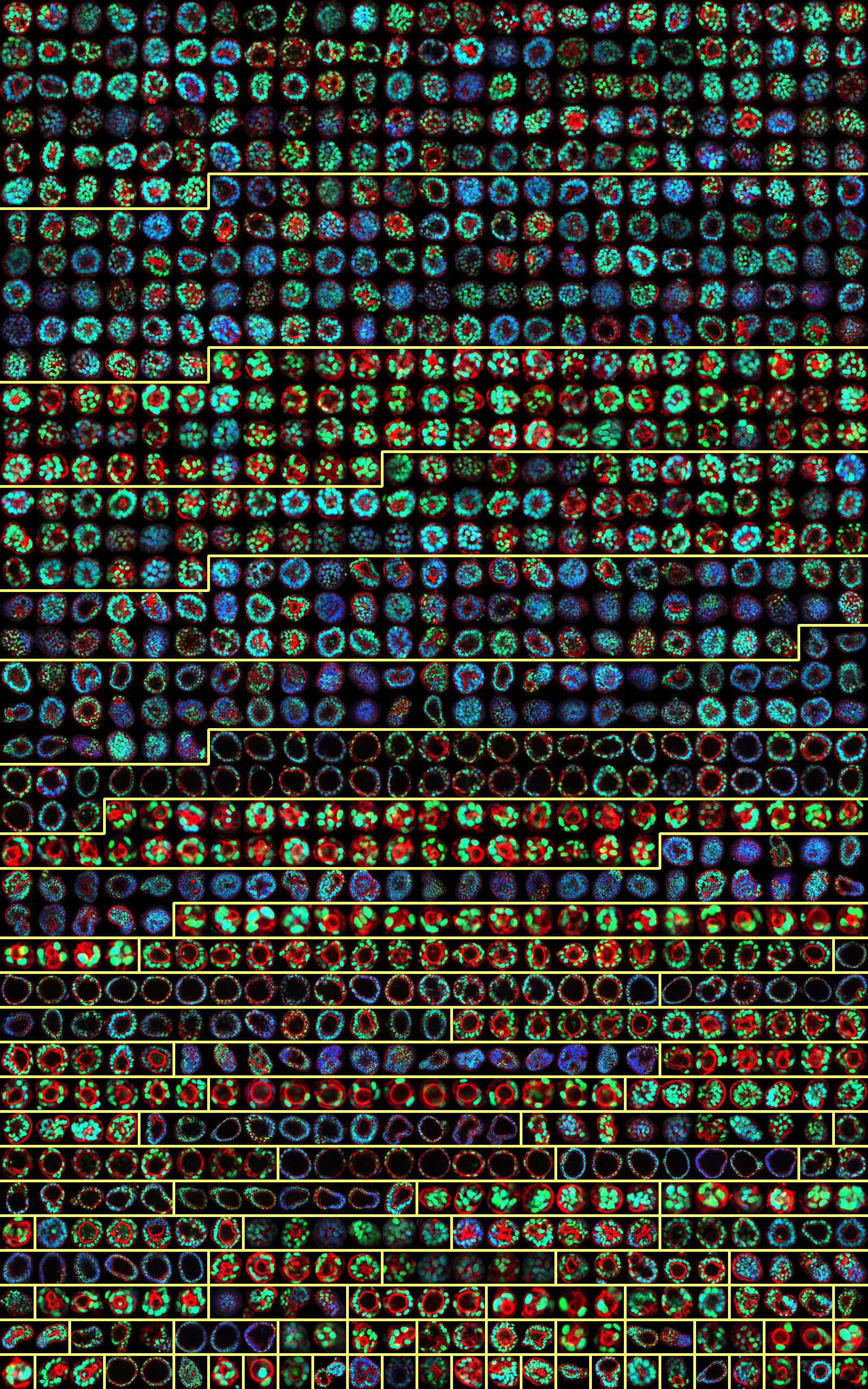}
\caption{Depicted above is the correlation clustering of the set \textit{Unlabeled-1000} with respect to costs defined by local solutions to the partial quadratic assignment problem.}
\label{appendix:clustering_1000_GaA_figure}
\end{figure}

\subsection{Additional qualitative experiments}
\label{appendix:experiments-qualitative}

In order to report, qualitatively, the clustering of a larger collection of organoid images by the methods under examination, we show 
in \Cref{appendix:clustering_1000_unordered_figure} the set \emph{Unlabeled-1000} of 1000 organoid images, 
in \Cref{appendix:clustering_1000_NN_figure} the correlation clustering of these images with respect to costs defined by the twin network learned with data augmentation, and 
in \Cref{appendix:clustering_1000_GaA_figure} the correlation clustering of \emph{Unlabeled-1000} with respect to costs defined by local solutions to the partial quadratic assignment problem.
It can be seen from these figures that correlation clustering with respect to costs defined by local solutions to the partial quadratic assignment problem leads to more and smaller clusters.

\subsection{Runtimes}
\label{section:runtimes}

\tikzmath{\tominutesfactor=60000000;}

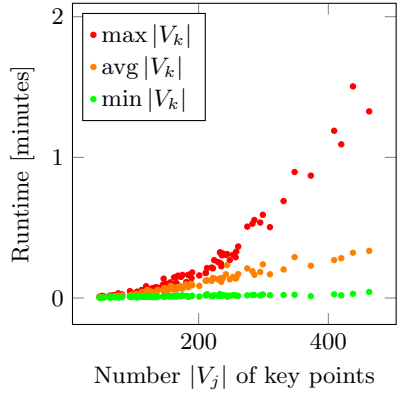
\begin{figure}[t]
\small
\noindent\makebox[\textwidth][c]{%
	\begin{minipage}{0.4\linewidth}
		\begin{tikzpicture}
			\pgfplotsset{
				width=\linewidth,
				height=\linewidth
			}
						\begin{axis}[
				xlabel={Number $|V_j|$ of key points},
				ylabel={Runtime [minutes]},
				legend pos=north west,
				legend cell align={left}
				]
				\addplot+[
				only marks,
				mark=*,
				mark options={fill=red, scale=0.5, solid},
				red
				] table[
				x expr=\thisrow{num_features}, 
				y expr=\thisrow{cost_runtime_max} / \tominutesfactor, 
				col sep=comma
				] {data/runtime_measurements/runtime_measurements_GMP.csv};
				\addlegendentry{$\max |V_k|$};				
				
				\addplot+[
				only marks,
				mark=*,
				mark options={fill=orange, scale=0.5, solid},
				orange
				] table[
				x expr=\thisrow{num_features}, 
				y expr=\thisrow{cost_runtime_avg} / \tominutesfactor, 
				col sep=comma
				] {data/runtime_measurements/runtime_measurements_GMP.csv};
				\addlegendentry{$\mathrm{avg} \, |V_k|$};
				
				\addplot+[
				only marks,
				mark=*,
				mark options={fill=green, scale=0.5, solid},
				green
				] table[
				x expr=\thisrow{num_features}, 
				y expr=\thisrow{cost_runtime_min} / \tominutesfactor, 
				col sep=comma
				] {data/runtime_measurements/runtime_measurements_GMP.csv};
				\addlegendentry{$\min |V_k|$};
			\end{axis}
		\end{tikzpicture}
	\end{minipage}}%
	\caption{Reported above are the runtimes of the local search algorithm for the partial quadratic assignment problem, for all instances from \emph{Test-100}. 
	For every number $|V_j|$ of key points in the first image, the runtime is shown for the maximum number $\max |V_k|$ of key points in the second image, the average $\mathrm{avg} \, |V_k|$ and the minimum $\min |V_k|$.}
	\label{figure:runtime-curves}
\end{figure}

\textbf{Partial quadratic assignment problem.}
We report in \Cref{figure:runtime-curves} the runtimes of the local search algorithm for the partial quadratic assignment problem, for all instances from \emph{Test-100}. 
For every number $|V_j|$ of key points in the first image, the runtime is shown for the maximum number $\max |V_k|$ of key points in the second image, the average $\mathrm{avg} \, |V_k|$ and the minimum $\min |V_k|$.
These runtimes are observed on a single core of an Intel Xeon Platinum 8470 CPU equipped with 512~GB of RAM, operating at 2.00~GHz.

In order to adjust the five parameters $\delta, \delta', \delta'', \theta$ and $\lambda$ of the instances of the partial quadratic assignment problem, we performed simulated annealing for 8~hours on one core of the same CPU. 

\textbf{Correlation clustering problem.}
All instances of the correlation clustering problem have been solved to optimality using Gurobi on all 52 cores of the same CPU.
For costs defined by the twin network with images as input, this has taken at most 20 minutes.
For costs defined by the twin network with histograms as input, it has taken at most 42 minutes. 
For costs defined by local solutions to the partial quadratic assignment problem, it has taken at most 14 minutes.
For costs based on the Hellinger distance, it has taken at most 73 minutes. 

\textbf{Twin networks.}
Learning the twin network with images as input, for 6,000 iterations, takes about 20~minutes; learning the twin network with histograms as input, for 6,000 iterations, takes about 70~minutes, both on a single NVIDIA GeForce RTX 4900.

\subsection{Ethical considerations}
\label{section:ethics}

This article presents work whose goal is to advance the fields of computer vision and biomedical image analysis.
There are many potential societal consequences of this work, none which we feel must be specifically highlighted here.

The H9 human embryonic stem cell (hESC) line was purchased from WiCell, and GFP reporter for PDX1 was generated in the Grapin-Botton lab. 
Approval to work on hESCs was obtained from the Robert Koch Institute under reference number 3.04.02/0148, and the hESC work was conducted according to the guidelines. 
Pancreatic organoids were generated using hESC-derived pancreatic progenitors, as previously reported \cite{goncalves-2021}.

\bibliographystyle{abbrvurl}
\bibliography{arxiv}
\end{document}